\journal{Computer Methods in Applied Mechanics and Engineering}
\newcommand{\specialcomment}[1]{{\textcolor{blue}{#1}}}
\DeclareMathOperator*{\argmin}{arg\,min} 
\newtheorem{theorem}{Theorem}
\begin{document}
\begin{frontmatter}

\title{Basis-to-Basis Operator Learning Using Function Encoders} 

\author[1]{Tyler Ingebrand\fnref{fn1}}
\ead{tyleringebrand@utexas.edu}

\author[1]{Adam J. Thorpe\corref{cor1}\fnref{fn1}} 
\ead{adam.thorpe@austin.utexas.edu}

\author[2]{Somdatta Goswami}
\ead{sgoswam4@jhu.edu}

\author[1]{Krishna Kumar}
\ead{krishnak@utexas.edu}

\author[1]{Ufuk Topcu}
\ead{utopcu@utexas.edu}

\cortext[cor1]{Corresponding author.}
\fntext[fn1]{These authors contributed equally to this work.}

\affiliation[1]{organization={Oden Institute for Computational Engineering \& Science},
            addressline={University of Texas at Austin}, 
            city={Austin},
            postcode={78712}, 
            state={TX},
            country={USA}}
\affiliation[2]{organization={Civil and Systems Engineering Department},
            addressline={Johns Hopkins University}, 
            city={Baltimore},
            postcode={21218}, 
            state={MD},
            country={USA}}

\begin{abstract}
We present Basis-to-Basis (B2B) operator learning, a novel approach for learning operators on Hilbert spaces of functions based on the foundational ideas of function encoders. We decompose the task of learning operators into two parts: learning sets of basis functions for both the input and output spaces and learning a potentially nonlinear mapping between the coefficients of the basis functions. B2B operator learning circumvents many challenges of prior works, such as requiring data to be at fixed locations, by leveraging classic techniques such as least squares to compute the coefficients. It is especially potent for linear operators, where we compute a mapping between bases as a single matrix transformation with a closed-form solution. Furthermore, with minimal modifications and using the deep theoretical connections between function encoders and functional analysis, we derive operator learning algorithms that are directly analogous to eigen-decomposition and singular value decomposition. We empirically validate B2B operator learning on seven benchmark operator learning tasks and show that it demonstrates a two-orders-of-magnitude improvement in accuracy over existing approaches on several benchmark tasks. 
\end{abstract}



\begin{keyword}
Operator Learning \sep Neural Operators \sep Function Encoders \sep PDE Modeling
\end{keyword}

\end{frontmatter}


\section{Introduction}
\label{sec:intro}

Operator learning has emerged as a significant paradigm in scientific machine learning, offering tools to approximate mappings between infinite-dimensional function spaces. It is particularly effective in solving partial differential equations (PDEs), where traditional numerical methods can be computationally intensive and may not scale well with increasing problem complexity. By learning operators that map input functions---such as initial and boundary conditions, source terms, or material properties---to output solutions, we can develop efficient surrogate models that offer rapid and accurate predictions for complex physical systems governed by PDEs.

Operator learning fundamentally involves mapping between infinite-dimensional function spaces, yet practical implementations require finite-dimensional representations to achieve computational tractability. Current approaches to operator learning broadly fall into two categories. The first category includes neural network-based methods like DeepONet \cite{lu2021learning}, which leverages the universal approximation theorem for operators \cite{chen1995universal}, and Fourier neural operators (FNO) \cite{li2021fourier}, which employ Fourier convolutions to compute integral transforms efficiently. Neural operator-based approaches rely on large neural architectures and GPU acceleration to approximate high-dimensional function transformations. 
Notably, neural network-based approaches do not explicitly leverage the geometric structure inherent in the underlying Hilbert spaces when learning the mapping between infinite and finite-dimensional representations. 
Hilbert spaces are equipped with a norm and a well-defined inner product that enable us to measure the norm or magnitude of the functions and determine the similarity or angle between functions, yielding richer geometric structures that should be exploited.
The second category of operator learning techniques encompasses reduced order model (ROM)-based methods \cite{geelen2023operator,kontolati2023influence}, which provide a more mathematically principled approach to dimensional reduction through techniques like proper orthogonal decomposition (POD) and principal component analysis (PCA). While ROMs offer rigorous mathematical foundations for representing finite-dimensional approximations of function spaces, they typically require fixed discretization schemes. 
This suggests an opportunity to develop new methods that exploit the geometric properties of the underlying Hilbert space to inform and constrain the learned representations.

We propose an algorithm that combines the mathematical principles of basis function learning with neural networks' computational flexibility and scalability. Our method eliminates the need for fixed meshes in input and output function spaces, offering greater versatility in practical applications. Specifically, we focus on learning nonlinear operators on Hilbert spaces using the theory of function encoders \cite{ingebrand2024zeroshottransferneuralodes, ingebrand2024zeroshotreinforcementlearningfunction} with neural network basis functions. By leveraging the inner product structure inherent in Hilbert spaces, our approach provides an intuitive notion of function projections. The Hilbert space framework unlocks powerful spectral theory tools, including eigen-decompositions and singular value decompositions, which we use to analyze and interpret learned operators. Furthermore, the geometric properties of Hilbert spaces enhances optimization by providing a natural notion of angle and distance, leading to better-behaved gradients and more stable training dynamics. These properties allow us to develop operator learning models that are both interpretable and analytically tractable while maintaining the flexibility to capture complex, nonlinear transformations.

\begin{figure}
    \centering
    \includegraphics[]{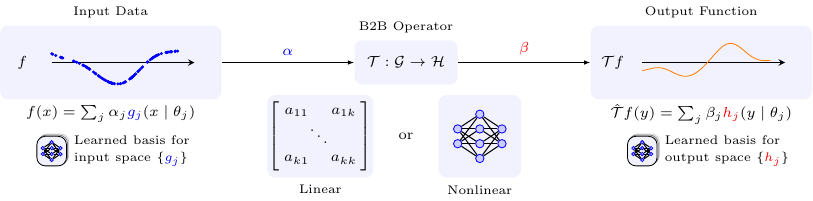}
    \caption{The basis-to-basis (B2B) operator maps the coefficients $\alpha$ of the input function $f$ to the coefficients $\beta$ of the output function $\mathcal{T}f$.}
    \label{fig:schematic}
\end{figure}

The function encoder-based operator learning approach consists of two main stages. First, it learns a set of basis functions to represent the input and output function spaces efficiently. This approach leverages function encoders---neural network-parameterized basis functions---to span the relevant function spaces. Then, it learns an operator that maps between these function representations.  We propose three variations:
\begin{itemize}
    \item \textbf{Basis-to-Basis (B2B)} - 
    The standard B2B operator learning approach involves two main steps:
    (i) Basis training: Independently learns a basis for each function space; (ii) Mapping: Establishes a mapping between these bases. For linear operators, the mapping is a matrix transformation, while for nonlinear operators, we use a deep neural network. This approach offers flexibility in handling linear and nonlinear operators while maintaining a clear separation between the function space representation and the operator mapping. 
    
    \item \textbf{Singular Value Decomposition (SVD)} - This variant of our approach incorporates the mathematical structure of SVD into the learning process, enabling the model to learn an SVD representation of the operator in an end-to-end manner. One of its key strengths lies in its ability to accommodate different input and output function spaces, making it versatile for a range of applications where input and output domains may differ. However, it is important to note that this approach is restricted to linear operators, as the SVD is inherently a linear decomposition.
    
    \item \textbf{Eigen-Decomposition (ED)} - This variant focuses on learning an eigen-decomposition of the operator in an end-to-end manner. This variant has specific constraints: it requires the input and the output function to be defined on the same bounded domain, limiting its application to scenarios where the input and output domains are the same. Additionally, like the SVD variant, it is restricted to linear operators due to the inherent nature of eigen-decomposition.
\end{itemize}
Despite their restriction to linear operators, SVD and ED provide powerful and interpretable representations of linear transformations between function spaces. This interpretability is particularly significant in cases when the eigenvectors and eigenvalues of the operator have physical or mathematical significance. We demonstrate the performance of B2B, SVD, and ED through numerous benchmark examples from the literature. Furthermore, we have also conducted linear operator analysis to support our findings. 


The paper is organized as follows. In Section \ref{sec:related_work}, we have provided a taxonomy of representative works in the literature. Section \ref{sec:FE_proposed} introduces our proposed approaches, basis-to-basis operator learning (B2B) and its variants. In Section \ref{sec:results}, we compare the performance of B2B, SVD, and ED with an existing popular neural operator, DeepONet, for seven benchmark problems from the literature. In Section \ref{sec:ablation}, we evaluate our approach's sensitivity to hyper-parameters. Finally, we summarize our observations and provide concluding remarks in Section \ref{sec:conclusion}. The code and the associated datasets are available at \url{https://github.com/tyler-ingebrand/OperatorFunctionEncoder}.

\section{Related Work}
\label{sec:related_work}

DeepONet and its variants \cite{transformerOperator, HE2024107258, KUSHWAHA2024104266, lu2021learning, recurrent, riemann, srno,     ZHANG2024112638,  d2no} employ dual-network architectures, combining a branch and trunk network through a dot product to approximate nonlinear mappings between function spaces. 
These approaches are highly general due to a nonlinear coefficient mapping that takes input function data at fixed measurement locations (``sensors"). 
Additional architectural advancements that apply to various application domains or propose specialized architectures have been suggested. 
A parallel development in neural operators has emerged through spectral methods:  FNO\cite{li2021fourier}, wavelet neural operators \cite{tripura2023wavelet}, and Laplace neural operators \cite{cao2024laplace} that leverage integral transforms to capture global dependencies in the solution space. 
However, these spectral approaches face significant challenges with the curse of dimensionality, requiring exponentially larger datasets to accurately capture underlying patterns. Moreover, current neural operator approaches often underutilize both the inherent operator structure and the finite-dimensional representation space.
Recent work focuses on enhancing these approaches to improve their accuracy and generalizability by incorporating physics knowledge into the network architecture and training procedure \cite{pideepo, pideeponetstudy, KORIC2023123809, li2023phasefielddeeponetphysicsinformeddeep}. In principle, these physics-informed constraints can be naturally integrated into function encoders, i.e., learned basis functions, using the same strategies. 

Reduced-order methods (ROMs) provide an elegant framework for approximating nonlinear operators by projecting functional data onto low-dimensional subspaces. These methods rely on mesh-based discretization and decomposition techniques, such as proper orthogonal decomposition or principal component analysis, to extract the dominant modes or basis functions. 
Working within these finite-dimensional representations, ROMs significantly reduce the computational complexity of the operator learning problem. However, a key limitation of ROM-based approaches that they share with DeepONet and FNO is that they typically depend on fixed, discrete grids or meshes. 
In contrast, our approach does not depend on fixed sensor locations since we compute transformations on learned representations of the input function.

Despite their generality, existing methods rely heavily on brute-force approximations and lack explicit mathematical structures, particularly Hilbert space inner products. This absence limits the use of fundamental geometric concepts such as orthogonality and projections, potentially compromising model scalability, interpretability, and stability \cite{reno}. While finite-dimensional reduced representations \cite{rompca, BennerGW15, SWISCHUK2019704} address some of these challenges by projecting functions onto finite sets of basis functions—thereby preserving essential features while reducing computational complexity \cite{ gappypod, reducedbasis, podnn, podapplication}—these approaches typically remain constrained by fixed meshes in both input and output spaces. Our approach overcomes these limitations by explicitly imposing structure on the learned representations through a well-defined inner product, enabling basis learning and coefficient mapping through least-squares optimization.

Recent advances in operator learning have explored the integration of neural networks with principled dimensionality reduction approaches. Several notable approaches have emerged in this direction. NOMAD~\cite{nomad} explores mappings between function spaces by encoding input functions into finite-dimensional embeddings and employing a nonlinear decoder that directly maps these embeddings to output functions, enabling efficient representation of functions that lie on nonlinear submanifolds. Similarly, CORAL~\cite{coral} employs an autoencoder-like approach to learn functional representations. Other researchers have combined PCA-based dimensionality reduction on fixed meshes with neural networks to approximate mappings in Hilbert spaces \cite{pca_nn}. 
Building on these foundations, some methods have enhanced the DeepONet architecture by incorporating geometric information as additional input \cite{He_2024, dimon}. 
Independently from our work, a similar approach was proposed that overcomes the need for fixed sensor locations using dictionary learning to constructively compute a low-dimensional projection onto a reduced set of basis functions \cite{bahmani2024resolution}. This approach uses principles from dictionary learning to learn orthogonal sinusoidal representation networks as an implicit neural representation of dictionary atoms via an iterative training procedure.

In contrast to prior work, our approach explicitly imposes structure on the operator learning solution. We leverage the principles of Hilbert spaces in learning the finite-dimensional representation and address the key limitation of fixed meshes by learning sets of neural network basis functions via simple gradient descent and least-squares optimization.
While many existing approaches employ Hilbert spaces, our explicit utilization of the Hilbert space structure to learn the representations provides a highly interpretable solution and enhances both the performance and stability of the learned mapping.

\section{Learning Nonlinear Operators Using Function Encoders}
\label{sec:FE_proposed}

Let the input function space $\mathcal{G} = \lbrace f \mid f : \mathcal{X} \to \mathcal{Z} \rbrace$ and the output function space $\mathcal{H} = \lbrace f \mid f : \mathcal{Y} \to \mathcal{W} \rbrace$ be Hilbert spaces, where $\mathcal{W}, \mathcal{X}, \mathcal{Y},$ and $\mathcal{Z}$ are Euclidean. We consider the problem of numerically estimating a continuous operator $\mathcal{T}: \mathcal{G} \to \mathcal{H}$. 
Formally, the operator learning problem is: given pairs $(f, h) \in \mathcal{G} \times \mathcal{H}$, and a potentially nonlinear operator $\mathcal{T}: \mathcal{G} \to \mathcal{H}$ such that $\mathcal{T}f = h$, the objective is to find an estimate $\hat{\mathcal{T}}$ of $\mathcal{T}$ such that $\hat{\mathcal{T}}f \approx \mathcal{T}f$ for all $f \in \mathcal{G}$.

We presume that during training, we have access to a set of datasets $\mathcal{D} = \lbrace D_{1}, D_{2}, \ldots, D_{N}\rbrace$. Each dataset $D_{n}$, $n = 1, \ldots, N$, consists of input-output pairs for both input and output function spaces, $D_n = \big ( \big \{ \big (x_i, f_n(x_i) \big ) \big \}_{i=1}^m, \big \{ \big (y_i, \mathcal{T}f_n(y_i)  \big ) \big \}_{i=1}^p \big )$.
The operator data $\mathcal{T}f_{n} \in \mathcal{H}$ can be obtained, for instance, using classical numerical methods. Note that the example points $x_{i}$ do not need to be the same across all datasets $D_{n}$. Indeed, a key advantage of our approach relative to prior works is that these sample points are not fixed. 
After training, we only have access to sample data consisting of input-output pairs $\lbrace \big (x_{i}, f(x_{i}) \big )\rbrace_{i=1}^{m}$ taken from a new, unseen function $f \in \mathcal{G}$ that we want to evaluate. Given this data, the goal is to infer $(\mathcal{T}f)(y)$ at any given evaluation point $y \in \mathcal{Y}$.

Our approach can be interpreted in two distinct parts: learning basis functions for both the input and output domains via function encoders \cite{ingebrand2024zeroshotreinforcementlearningfunction}, and then learning a mapping from the function representations in the input domain to the function representations in the output domain. We call our approach basis-to-basis (B2B) operator learning since we effectively learn an operator to map between two learned bases.

\subsection{Learning a Basis for Input and Output Domains}

To learn the input and the output domains, we use the principles of function encoders \cite{ingebrand2024zeroshotreinforcementlearningfunction}. This approach enables us to represent a function via a learned set of basis functions.
We first consider the input domain $\mathcal{G}$.
Function encoders learn a set of $k$ basis functions $g_{1}, \ldots, g_{k}$ that are parameterized by neural networks to span a Hilbert space $\mathcal{G}$. 
The functions $f \in \mathcal{G}$ can be represented as a linear combination of basis functions,
\begin{equation}
    \label{eqn: linear combination source}
    f(x) = \sum_{j=1}^{k} \alpha_{j} g_{j}(x \mid \theta_{j}),
\end{equation}
where $\alpha \in \mathbb{R}^{k}$ are real-valued coefficients and $\theta_{j}$ are the network parameters for $g_{j}$.

The coefficients $\alpha \in \mathbb{R}^{k}$ can be computed as the solution to a least-squares problem,
\begin{equation}
    \label{eqn: least-squares estimate}
    \min_{\alpha} \frac{1}{m} \sum_{i=1}^{m} \biggl \lVert f(x_{i}) - \sum_{j=1}^{k} \alpha_{j} g_{j}(x_{i} \mid \theta_{j}) \biggr\rVert_2^{2}.
\end{equation}
Let $G \in \mathbb{R}^{m \times k}$ be a matrix with elements $G_{ij} = g_{j}(x_{i} \mid \theta_{j})$. The solution to \eqref{eqn: least-squares estimate} can be computed in closed-form as $\alpha = (G^{\top} G)^{-1} G^{\top} \boldsymbol{f}$, where $\boldsymbol{f}$ is a vector with elements $\boldsymbol{f}_{i} = f(x_{i})$. Note that the matrix inversion $(G^{\top} G)^{-1}$ is $k \times k$, meaning it can be computed efficiently for a small number of basis functions, e.g. on the order of $100$. 

We presume access to a dataset $\mathcal{D}$ to train the function encoder. 
For each function $f_{n}$ and corresponding dataset $D_{n}$ in $\mathcal{D}$, the coefficients $\alpha$ are computed using \eqref{eqn: least-squares estimate}, and then we obtain an empirical estimate of $f$ via \eqref{eqn: linear combination source}. The error is computed using the Euclidean norm. The loss is simply the mean of the errors for all functions $f_{n}$ in $\mathcal{D}$, and is minimized via gradient descent \cite{ingebrand2024zeroshotreinforcementlearningfunction}, i.e.
\begin{equation}
    \label{eqn: function encoder loss}
    L = \frac{1}{Nm}\sum_{n=1}^N \sum_{i=1}^{m} \biggl \lVert f_{n}(x_{i}) - \sum_{j=1}^{k} \alpha_{j} g_{j}(x_i \mid \theta_{j}) \biggr \rVert_2^{2}.
\end{equation}
During training, the basis functions learn to align with the subspace spanned by the data, and
the least-squares method in \eqref{eqn: least-squares estimate} only requires the basis functions to be linearly independent \cite{ingebrand2024zeroshottransferneuralodes}. 
The training procedure is depicted in Figure \ref{fig: function encoder training}.
After training, the basis function networks are fixed. Then, to predict the outputs for a new function $f_{new} \in \mathrm{span}\lbrace g_{1}, \ldots, g_{k} \rbrace$, we can compute the coefficients using \eqref{eqn: least-squares estimate} with a dataset $D_{new}$, and compute the predictions as the linear combination of fixed basis functions (Figure \ref{fig: function encoder inference}). 
This is an important point since the coefficients are computed without retraining or fine-tuning. Furthermore, these coefficients are a fully informative representation of the function since they can reproduce the function for any new data point.

\begin{figure}
    \centering
    \includegraphics[]{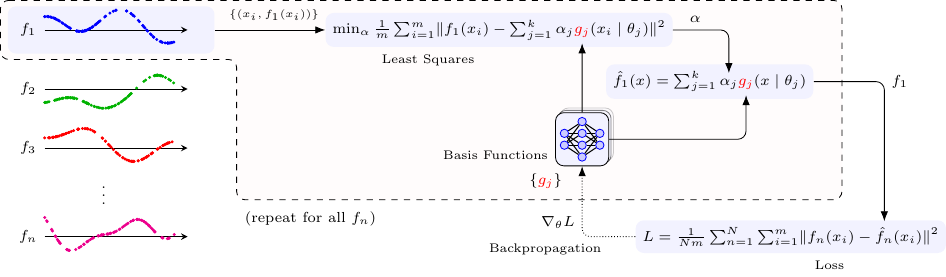}
    \caption{\textit{The function encoder training procedure}. We compute estimates $\hat{f}_{1}, \ldots, \hat{f}_{n}$ of multiple functions $f_{1}, \ldots, f_{n} \in \mathcal{F}$ using data. Then, we compute the total loss of the overall functions and backpropagate the gradient with respect to the basis function parameters $\theta$ using gradient descent.}
    \label{fig: function encoder training}
\end{figure}

\begin{figure}
    \centering
    \includegraphics[]{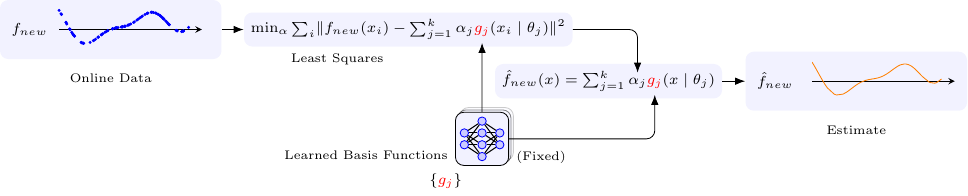}
    \caption{The function encoder inference procedure. Using online data from a new, unseen function $f_{new}$, we compute the corresponding coefficients $\alpha \in \mathbb{R}^{k}$ using least squares. The estimate $\hat{f}_{new}$ of $f_{new}$ is a linear combination of the fixed basis functions $\lbrace g_{j} \rbrace$ with the coefficients $\alpha$.}
    \label{fig: function encoder inference}
\end{figure}

We use an identical training procedure to learn a set of basis functions $h_{1}, \ldots, h_{\ell}$ to span the codomain of the operator $\mathcal{T}$. 
The transformed function $\mathcal{T}f \in \mathcal{H}$ can be represented via a linear combination of basis functions $h_{1}, \ldots, h_{\ell}$, 
\begin{equation}
    \label{eqn: linear combination target}
    (\mathcal{T}f)(y) = \sum_{j=1}^{\ell} \beta_{j} h_{j}(y \mid \vartheta_{j}),
\end{equation}
and we compute the corresponding coefficients $\beta \in \mathbb{R}^{\ell}$ for $\mathcal{T}f$ as before using \eqref{eqn: least-squares estimate}. 
We obtain an empirical estimate of the coefficients using data pairs $\lbrace \big (y_{i}, (\mathcal{T}f)(y_{i}) \big ) \rbrace_{i=1}^{p}$. 
We train the basis functions using the dataset $\mathcal{D}$, where for each function $f_{n}$ and corresponding dataset $D_{n} \in \mathcal{D}$, we use the evaluations of the operator action $(\mathcal{T}f_{n})(y_{i})$ to compute the error, before accumulating the loss and updating via gradient descent as in \eqref{eqn: function encoder loss}. 
After training, we obtain a means to represent the operator action $\mathcal{T}f \in \mathrm{span}\lbrace h_{1}, \ldots, h_{\ell} \rbrace$ of a new function $f$ via the coefficient representation $\beta$.

This is key to our approach since we can now view the infinite-dimensional operator $\mathcal{T}: \mathcal{G} \to \mathcal{H}$ as a finite-dimensional mapping of coefficients $\alpha \mapsto \beta$.

\subsection{Nonlinear Operators Between Learned Spaces}

Once the basis functions $g_{1}, \ldots, g_{k}$ and $h_{1}, \ldots, h_{\ell}$ are trained, our goal is to learn a nonlinear mapping from the coefficient representation $\alpha \in \mathbb{R}^{k}$ of a function $f \in \mathcal{G}$ to a corresponding representation $\beta \in \mathbb{R}^{\ell}$ of the transformed function $\mathcal{T}f \in \mathcal{H}$.
To model the nonlinear operator $\mathcal{T}$, we use a simple feed-forward neural network to model the relation $\hat{\mathcal{T}}: \alpha \mapsto \beta$, though we are free to choose any architecture or nonlinear regression technique.

We can use the dataset $\mathcal{D}$ to train the operator network.
For each function $f_{n}$ and the corresponding dataset $D_{n}$, we compute the coefficient representation $\alpha$ using example data $\lbrace \big (x_{i}, f_{n}(x_{i}) \big) \rbrace_{i=1}^{m}$ and the corresponding $\beta$ using  data $\lbrace \big (y_{i}, (\mathcal{T}f_{n})(y_{i}) \big )\rbrace_{i=1}^{p}$ using \eqref{eqn: least-squares estimate}. We then perform a forward pass of the network to obtain the coefficients estimate $\hat{\beta}=\hat{\mathcal{T}}(\alpha)$. 
The loss is the error of our coefficients estimate $\lVert \beta - \hat{\beta} \rVert^{2}_2$, which is minimized via gradient descent. In other words, we perform standard regression from $\alpha$ to $\beta$. Please see Algorithm \ref{alg:b2b} for the full B2B algorithm.

\begin{algorithm}[!h]
  \caption{Basis-to-Basis Operator Learning}
  \fontsize{10.0pt}{10.0pt}\selectfont
  \label{alg:b2b}
  \begin{algorithmic}[1]

    \STATE \textbf{Input:} Step size $\eta$, set of data sets $\mathcal{D}$
    \STATE Initialize $\{g_j\}$ parameterized by $\{\theta_j\}$ \specialcomment{\# Input function space basis}
    \STATE Initialize $\{h_j\}$ parameterized by $\{\vartheta_{j}\}$  \specialcomment{\# Output function space basis}
    \STATE Initialize $\hat{\mathcal{T}}$ parameterized by $\phi$  \specialcomment{\# Operator Network}

    \specialcomment{\# Train the input function space basis via the function encoder algorithm.}
    \WHILE{not converged}
        \STATE Loss $L = 0$
        \FOR{$\{\big (x_i, f_n(x_i) \big ) \}_{i=1}^m $ in $\mathcal{D}$}
            \STATE $\alpha = \argmin_{\alpha} \frac{1}{m} \sum_{i=1}^{m} \biggl \lVert f_n(x_{i}) - \sum_{j=1}^{k} \alpha_{j} g_{j}(x_{i} \mid \theta_{j}) \biggr\rVert_2^{2}$
            \STATE $\hat{f}_n = \sum_j \alpha_j g_j$
            \STATE $L \mathrel{+}= \frac{1}{Nm}\sum_{i=1}^m \biggl\lVert f_n(x_i) - \hat{f}_n(x_i) \biggr\rVert_2^2$ 
        \ENDFOR
        \STATE $\theta_j = \theta_j - \eta \nabla_{\theta_j} L$
    \ENDWHILE

    \specialcomment{\# Train the output function space basis via the function encoder algorithm.}
    \WHILE{not converged}
        \STATE Loss $L = 0$
        \FOR{$\{\big (y_i, \mathcal{T}f_n(y_i) \big ) \}_{i=1}^p $ in $\mathcal{D}$}
            \STATE $\beta = \argmin_{\beta} \frac{1}{p} \sum_{i=1}^{p} \biggl \lVert \mathcal{T}f_n(y_{i}) - \sum_{j=1}^{\ell} \beta_{j} h_{j}(y_{i} \mid \vartheta_{j}) \biggr\rVert_2^{2}$
            \STATE $\widetilde{\mathcal{T}f_n} = \sum_j \beta_j h_j$
            \STATE $L \mathrel{+}= \frac{1}{Np}\sum_{i=1}^p \biggl\lVert \mathcal{T}f_n(y_i) - \widetilde{\mathcal{T}f_n}(y_i) \biggr\rVert_2^2$ 
        \ENDFOR
        \STATE $\vartheta_j = \vartheta_j - \eta \nabla_{\vartheta_j} L$
    \ENDWHILE

    \specialcomment{\# Train the operator network via standard gradient descent.}
    \WHILE{not converged}
        \STATE Loss $L = 0$
        \FOR{$\{\big (x_i, f_n(x_i) \big )\}_{i=1}^m, \{\big (y_i, \mathcal{T}f_n(y_i) \big )\}_{i=1}^p $ in $\mathcal{D}$}
            \STATE $\alpha = \argmin_{\alpha} \frac{1}{m} \sum_{i=1}^{m} \biggl \lVert f_n(x_{i}) - \sum_{j=1}^{k} \alpha_{j} g_{j}(x_{i} \mid \theta_{j}) \biggr\rVert_2^{2}$
            \STATE $\beta = \argmin_{\beta} \frac{1}{p} \sum_{i=1}^{p} \biggl \lVert \mathcal{T}f_n(y_{i}) - \sum_{j=1}^{\ell} \beta_{j} h_{j}(y_{i} \mid \vartheta_{j}) \biggr\rVert_2^{2}$
            \STATE $\hat{\beta} = \hat{\mathcal{T}}(\alpha | \phi)$
            \STATE $L \mathrel{+}= \frac{1}{N} \biggl\lVert \beta - \hat{\beta} \biggr\rVert_2^2$ 
        \ENDFOR
        \STATE $\phi = \phi - \eta \nabla_{\phi} L$
    \ENDWHILE
    \STATE \textbf{return} $\lbrace g_j \rbrace, \lbrace h_j \rbrace, \lbrace \hat{\mathcal{T}} \rbrace$ 
  \end{algorithmic}
\end{algorithm}

\subsection{Learning Linear Operators Via Least Squares}
\label{section: linear operators}

Linear operators are a special case of our proposed approach since the transformation of coefficients from the input to the output domain can be viewed as a simple matrix multiplication. 
Given a trained basis $g_{1}, \ldots, g_{k}$ and $h_{1}, \ldots, h_{\ell}$, we learn a linear mapping $\alpha \mapsto \beta$ as a matrix $A \in \mathbb{R}^{\ell \times k}$. 
Using the dataset $\mathcal{D}$ and the trained basis functions for the input and output domains, we compute the coefficient representations $\alpha_{n}$ and $\beta_{n}$ for $f_{n}$ and $\mathcal{T}f_{n}$, where the subscript $n$ here denotes the sample index, not the $n^{\rm th}$ element.
Then, using data $\lbrace (\alpha_{n}, \beta_{n})_{n=1}^N \rbrace$, we compute the matrix $A$ as the solution to the following least-squares optimization problem,
\begin{equation}
    \label{eqn: linear operator least squares}
    \min_{A} \frac{1}{N} \sum_{n=1}^N \lVert \beta_{n} - A \alpha_{n} \rVert^{2}.
\end{equation}

The linear operator gives us generalization performance across the input and output space. In fact, due to the linear properties of the least-squares solution and the matrix transformation, we arrive at the following lemma:

\begin{theorem} \label{theorem}
  If $f_3 := a f_1 + b f_2$, $a,b\in \mathbb{R}$, $f_1, f_2 \in \mathcal{G}$, and $\mathcal{T}$ is a linear operator, then $\hat{\mathcal{T}}f_3 = a \hat{\mathcal{T}}f_1 + b \hat{\mathcal{T}}f_2$. 
\end{theorem}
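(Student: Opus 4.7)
The plan is to show that the learned approximation $\hat{\mathcal{T}}$ factors as a composition of three linear maps, so that its linearity follows immediately. Concretely, applying $\hat{\mathcal{T}}$ to a function $f$ proceeds by (i) computing the coefficient vector $\alpha \in \mathbb{R}^k$ from pointwise samples of $f$ via \eqref{eqn: least-squares estimate}, (ii) mapping $\alpha \mapsto \beta = A\alpha$ through the trained operator matrix from \eqref{eqn: linear operator least squares}, and (iii) reconstructing $\hat{\mathcal{T}}f(y) = \sum_{j=1}^{\ell} \beta_j h_j(y \mid \vartheta_j)$ via \eqref{eqn: linear combination target}. I would verify linearity of each of these three steps and then chain them.

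For step (i), I would fix the same set of query points $\{x_i\}_{i=1}^m$ for evaluating $f_1, f_2, f_3$ and use the closed-form least squares solution $\alpha = (G^\top G)^{-1} G^\top \boldsymbol{f}$ given immediately after \eqref{eqn: least-squares estimate}. Since the basis $\{g_j\}$ is fixed after training, the matrix $(G^\top G)^{-1} G^\top$ is the same for each $f_i$, and since $\boldsymbol{f}_3 = a \boldsymbol{f}_1 + b \boldsymbol{f}_2$ entrywise at the shared $x_i$, I get $\alpha_3 = a \alpha_1 + b \alpha_2$. Step (ii) is linear by inspection: $\beta_3 = A \alpha_3 = a A \alpha_1 + b A \alpha_2 = a \beta_1 + b \beta_2$. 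Step (iii) is linear in $\beta$ since \eqref{eqn: linear combination target} is a fixed linear combination of the output basis functions. Composing these three steps yields $\hat{\mathcal{T}} f_3 = a \hat{\mathcal{T}} f_1 + b \hat{\mathcal{T}} f_2$ pointwise in $y$, which is exactly the claim.

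The main conceptual obstacle is clarifying the role of the sampling locations. If $f_1$, $f_2$, and $f_3$ were evaluated at different query sets, then the linear map $(G^\top G)^{-1} G^\top$ would differ between calls, and strict additivity of $\alpha$ would require an additional assumption such as exact representability $f_i \in \mathrm{span}\{g_j\}$ (so that $\alpha$ is the unique coordinate vector, independent of the query grid). I would therefore state the theorem under the implicit convention that the coefficients of $f_1, f_2, f_3$ are computed from the same sampling set, or equivalently under the assumption that the functions lie in the span of the learned basis, and note that the trained operator matrix $A$ and basis $\{h_j\}$ do not depend on $f$. I would also remark that the hypothesis that $\mathcal{T}$ itself is linear is used implicitly through training (it justifies fitting a single matrix $A$) but is not needed in the algebraic argument above, so the stated linearity property is an intrinsic structural feature of the learned model.
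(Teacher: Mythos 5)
Your proposal is correct and follows essentially the same route as the paper: both arguments factor $\hat{\mathcal{T}}$ as the composition $H^\top A (G^\top G)^{-1} G^\top$ of the linear least-squares coefficient map, the matrix $A$, and the linear reconstruction in the output basis, so linearity is immediate. Your added caveat about requiring a shared sampling set $\{x_i\}$ (or exact representability in $\mathrm{span}\{g_j\}$) is a worthwhile clarification that the paper only acknowledges informally in the remark following the theorem.
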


\begin{proof} 
    Let $f_{1}, f_{2}, f \in \mathcal{G}$, $c \in \mathbb{R}$. 
    Let $\hat{\mathcal{T}} = A \in \mathbb{R}^{\ell \times k}$ as in \eqref{eqn: linear operator least squares}.
    
    \textit{Additivity:} 
    From \eqref{eqn: least-squares estimate}, we have that for any $f \in \mathcal{G}$, the corresponding coefficients $\alpha = (G^{\top} G)^{-1} G^{\top} \boldsymbol{f}$.  
    Since $\hat{\mathcal{T}} f = H^{\top} \beta$ from \eqref{eqn: linear combination target}, where $H = [h_{1}, \ldots, h_{\ell}]^{\top}$ and $\beta = A \alpha$, then $\hat{\mathcal{T}} f = H^{\top} (A \alpha)$, and we have that
    \begin{equation}
        \hat{\mathcal{T}}(f_{1} + f_{2}) = H^{\top} A (G^{\top} G)^{-1} G^{\top} (\boldsymbol{f}_{1} + \boldsymbol{f}_{2}) = H^{\top} A (G^{\top} G)^{-1} G^{\top} \boldsymbol{f}_{1} + H^{\top} A (G^{\top} G)^{-1} G^{\top} \boldsymbol{f}_{2} = \hat{\mathcal{T}}f_{1} + \hat{\mathcal{T}}f_{2}.
    \end{equation}

    \textit{Homogeneity:} 
    For any $f \in \mathcal{G}$, we have that $cf = c (G^{\top} G)^{-1} G^{\top} \boldsymbol{f} = (G^{\top} G)^{-1} G^{\top} (c \boldsymbol{f})$. Thus, from \eqref{eqn: linear combination target}, we have that
    \begin{equation}
        \hat{\mathcal{T}}(cf) = H^{\top} A (G^{\top} G)^{-1} G^{\top} (c \boldsymbol{f}) = c H^{\top} A (G^{\top} G)^{-1} G^{\top} \boldsymbol{f} = c \hat{\mathcal{T}}f.
    \end{equation}

    This concludes the proof.
\end{proof}

This theorem implies that we have guaranteed generalization within the function spaces for linear operators. In practice, there is error due to sampling when the $x_{i}$'s are sampled at different locations between datasets.

\subsection{Singular Value Decomposition and Eigendecompositions}

We now consider two alternative methods for computing linear operators using function encoders that provide greater analytical information, at the expense of prediction accuracy. We focus on operator analysis using singular value decomposition and eigendecomposition, which provide information about the learned operator through the singular values or eigenvalues. 

\begin{figure}[!b]
    \centering
    \includegraphics[]{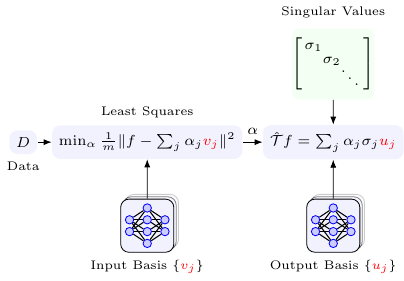} \hspace{0.5cm} \unskip\ \vrule\ \hspace{0.5cm}
    \includegraphics[]{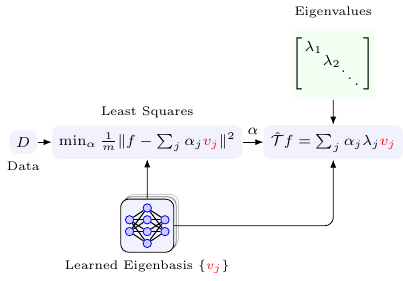}
    \caption{Left: The inference procedure for SVD. A dataset $D=\{\big(x_i, f(x_i)\big)\}_{i=1}^m$ is given which describes a function $f$. This data is used to compute the coefficients of the basis functions via the least-squares solution. Then, $\hat{\mathcal{T}}f$ is approximated using \eqref{eqn: svd operator}. 
    Right: The inference procedure for ED. The procedure is analogous, with the only change being that $\mathcal{T}$ is self-adjoint, and so the same basis is used for the input and output spaces. 
    }
    
    \label{fig:enter-label}
\end{figure}

\subsubsection{Using Function Encoders for Singular Value Decomposition (SVD)}

The singular value decomposition of an operator $\mathcal{T} = U \Sigma V^{\top}$ is a representation in terms of its singular values and singular vectors. For a compact, linear, bounded operator $\mathcal{T} : \mathcal{G} \to \mathcal{H}$, there exists a singular value decomposition such that
$\mathcal{T}f = \sum_{i} \sigma_i \langle f, v_i \rangle u_i$,
where $\sigma_{i} \in \mathbb{R}_{+}$ are positive real coefficients and $\lbrace u_{i} \rbrace$ and $\lbrace v_{i} \rbrace$ are orthonormal systems. The coefficients $\sigma$, called the singular values, and $\lbrace u_{i} \rbrace$ and $\lbrace v_{i} \rbrace$, called the left and right singular vectors, satisfy
\begin{equation}
    \label{eqn: operator on singular vectors}
    \mathcal{T} v_i = \sigma_i u_i.
\end{equation}
Since the right singular vectors $\lbrace v_i \rbrace$ form an orthonormal system, any function $f \in \mathcal{G}$ can be represented as a (scaled) linear combination,
\begin{equation}
    \label{eqn: svd function basis}
    f = \sum_{i} \langle f, v_i \rangle v_i.
\end{equation}
The coefficients of the function in the input domain are denoted by $\alpha_i = \langle f, v_i \rangle$.
Using \eqref{eqn: svd function basis} and \eqref{eqn: operator on singular vectors}, the operator action $\mathcal{T}f$ on a function can be decomposed as
\begin{equation}
    \label{eqn: svd operator}
    \mathcal{T}f = \sum_{i} \sigma_i \langle f, v_i \rangle u_i = \sum_{i} \sigma_i \alpha_i u_i.
\end{equation}

The learning procedure is as follows. First, we initialize the basis functions $\lbrace u_i \rbrace$ and $\lbrace v_i \rbrace$ as neural networks, as in Section \ref{sec:FE_proposed}. 
We also initialize $\lbrace \sigma_i \rbrace$ as learned parameters, i.e.\ $\sigma \in \mathbb{R}^k$. For any pair $(f, \mathcal{T}f)$ and the corresponding dataset $D = \big \{ \{x_i, f(x_i) \}_{i=1}^m, \{y_i, \mathcal{T}f(y_i)\}_{i=1}^p \big \}$, we compute 

\begin{equation}
    \label{eqn: least-squares estimate svd}
    \alpha = \argmin_{\alpha} \frac{1}{m} \sum_{i=1}^{m} \biggl \lVert f(x_{i}) - \sum_{j=1}^{k} \alpha_{j} v_{j}(x_{i} \mid \theta_{j}) \biggr\rVert_2^{2}.
\end{equation}

\noindent By using least squares to compute the coefficients rather than the inner product alone, we remove the requirement that the bases are orthonormal. Then, we use the right-hand side of \eqref{eqn: svd operator} to approximate the operator action, $\hat{\mathcal{T}}f = \sum_i \sigma_i \alpha_i u_i$. Lastly, our loss is simply $\lVert \mathcal{T}f - \hat{\mathcal{T}}f \rVert_2^2$ evaluated via $\{y_i, \mathcal{T}f(y_i)\}_{i=1}^p$. The singular vectors and values are trained to minimize this loss in an end-to-end fashion via gradient descent, analogous to the function encoder algorithm in \cite{ingebrand2024zeroshotreinforcementlearningfunction}. As a result, we simultaneously learn the left singular vectors, right singular vectors, and singular values for a linear operator acting on function spaces. We formalize this as Algorithm \ref{alg:svd}.

One drawback of SVD is that the training procedure does not necessarily learn to span the input domain well because we do not explicitly train the basis networks $v_{1}, \ldots, v_{k}$ to span the input domain $\mathcal{G}$. This end-to-end loss can lead to poorer generalization performance outside the training dataset.
However, this is offset by the fact that by explicitly learning the singular values, we enable further analysis of the operator, such as sensitivity analysis (via the condition number), stability analysis, or principal component analysis for model reduction.

\subsubsection{Using Function Encoders for Eigen-decompositions (ED)}

This approach also extends to learning eigendecompositions of compact, self-adjoint linear operators. The eigendecomposition of an operator $\mathcal{T}$ consists of a set of eigenvectors $\lbrace v_{i} \rbrace$ and a set of eigenvalues $\lambda_{i} \in \mathbb{R}$ such that $\mathcal{T}v_{i} = \lambda_{i} v_{i}$.
The operator action $\mathcal{T}f$ can be decomposed as
\begin{equation}
    \mathcal{T}(f) = \sum_{i} \lambda_{i} \alpha_{i} v_{i},
\end{equation}
where $\alpha$ are the coefficients of the function $f$ computed using \eqref{eqn: least-squares estimate}. 

Here, the input and output domain are the same, meaning we can use the same set of basis functions to represent the function $f$ and the operator action on the function $\mathcal{T}f$. 
Similar to before, we simultaneously learn a set of eigenvalues $\lambda_{i} \in \mathbb{R}$ during training using backpropagation, and the learned basis functions correspond to the eigenvectors of the operator. The algorithm for finding these basis functions is analogous to SVD with $u_i=v_i$. We formalize this in Algorithm \ref{alg:ed}.
The principal advantage of this approach is that it provides greater analytical insights for the learned operator.

The compactness of the learned derivative operator, as revealed by our SVD and ED approaches, offers significant insights into the nature of the differentiation process.  \Cref{fig:derivative_decay_comparison} illustrates the decay of eigenvalues and singular values for the derivative operator. The rapid decay observed in these plots, approximately following an exponential pattern ($\approx e^{-0.05x}$ for the matrix singular values), represents the compactness of the operator. This compactness suggests that a low-rank representation can efficiently approximate the derivative operator, which is crucial for computational efficiency in large-scale applications. Additionally, the smooth decay indicates that the operator has a strong smoothing effect, consistent with the well-known property that differentiation tends to amplify high-frequency components. The compactness also implies that the operator is bounded, which is important for ensuring stability in numerical implementations.

\begin{algorithm}[!h]
  \caption{SVD Extension of the Function Encoder Algorithm}
  \fontsize{10.0pt}{10.0pt}\selectfont
  \label{alg:svd}
  \begin{algorithmic}[1]

    \STATE \textbf{Input:} Step size $\eta$, set of data sets $\mathcal{D}$
    \STATE Initialize $\{v_j\}$ parameterized by $\{\theta_j\}$ 
    \STATE Initialize $\{\sigma_j\}$ 
    \STATE Initialize $\{u_j\}$ parameterized by $\{\vartheta_{j}\}$ 

    \WHILE{not converged}
        \STATE \specialcomment{\# Compute loss for all pairs $(f, \mathcal{T}f)$}
        \STATE Loss $L = 0$
        \FOR{$\{\big (x_i, f_n(x_i) \big )\}_{i=1}^m, \{\big (y_i, \mathcal{T}f_n(y_i) \big )\}_{i=1}^p $ in $\mathcal{D}$}
            \STATE $\alpha = \argmin_{\alpha} \frac{1}{m} \sum_{i=1}^{m} \biggl \lVert f(x_{i}) - \sum_{j=1}^{k} \alpha_{j} v_{j}(x_{i} \mid \theta_{j}) \biggr\rVert_2^{2}$
            \STATE $\hat{\mathcal{T}}f = \sum_i \sigma_i \alpha_i u_i$
            \STATE $L \mathrel{+}= \frac{1}{Np}\sum_{i=1}^p \biggl\lVert \mathcal{T}f(y_i) - \hat{\mathcal{T}}f(y_i) \biggr\rVert_2^2$ 
        \ENDFOR
        \STATE \specialcomment{\# Back propagate to all parameters using gradient descent}
        \FOR{$j=1...k$}
            \STATE $\theta_j = \theta_j - \eta \nabla_{\theta_j} L$
            \STATE $\sigma_j = \sigma_j - \eta \nabla_{\sigma_j} L$
            \STATE $\vartheta_j = \vartheta_j - \eta \nabla_{\vartheta_j} L$

        \ENDFOR
        
    \ENDWHILE
    \STATE \textbf{return} $\lbrace u_j \rbrace, \lbrace \sigma_j \rbrace, \lbrace v_j \rbrace$ \specialcomment{\# i.e.\ $U \Sigma V^T$}
  \end{algorithmic}
\end{algorithm}

\begin{algorithm}[!h]
  \caption{ED Extension of the Function Encoder Algorithm}
  \fontsize{10.0pt}{10.0pt}\selectfont
  \label{alg:ed}
  \begin{algorithmic}[1]

    \STATE \textbf{Input:} Step size $\eta$, set of data sets $\mathcal{D}$
    \STATE Initialize $\{v_j\}$ parameterized by $\{\theta_j\}$ 
    \STATE Initialize $\{\lambda_j\}$ 

    \WHILE{not converged}
        \STATE \specialcomment{\# Compute loss for all pairs $(f, \mathcal{T}f)$}
        \STATE Loss $L = 0$
        \FOR{$\{\big (x_i, f_n(x_i) \big )\}_{i=1}^m, \{\big (y_i, \mathcal{T}f_n(y_i) \big )\}_{i=1}^p $ in $\mathcal{D}$}
            \STATE $\alpha = \argmin_{\alpha} \frac{1}{m} \sum_{i=1}^{m} \biggl \lVert f(x_{i}) - \sum_{j=1}^{k} \alpha_{j} v_{j}(x_{i} \mid \theta_{j}) \biggr\rVert_2^{2}$
            \STATE $\hat{\mathcal{T}}f = \sum_i \lambda_i \alpha_i v_i$
            \STATE $L \mathrel{+}= \frac{1}{Np}\sum_{i=1}^p \biggl\lVert \mathcal{T}f(y_i) - \hat{\mathcal{T}}f(y_i) \biggr \rVert_2^2$ 
        \ENDFOR
        \STATE \specialcomment{\# Back propagate to all parameters using gradient descent}
        \FOR{$j=1...k$}
            \STATE $\theta_j = \theta_j - \eta \nabla_{\theta_j} L$
            \STATE $\lambda_j = \lambda_j - \eta \nabla_{\lambda_j} L$
        \ENDFOR
        
    \ENDWHILE
    \STATE \textbf{return} $\lbrace \lambda_j \rbrace, \lbrace v_j \rbrace$ \specialcomment{\# i.e.\ $\Lambda, V$}
  \end{algorithmic}
\end{algorithm}

\begin{figure}[h]

\centering
\subfloat[Eigen value decay]{
\includegraphics[width=0.48\linewidth]{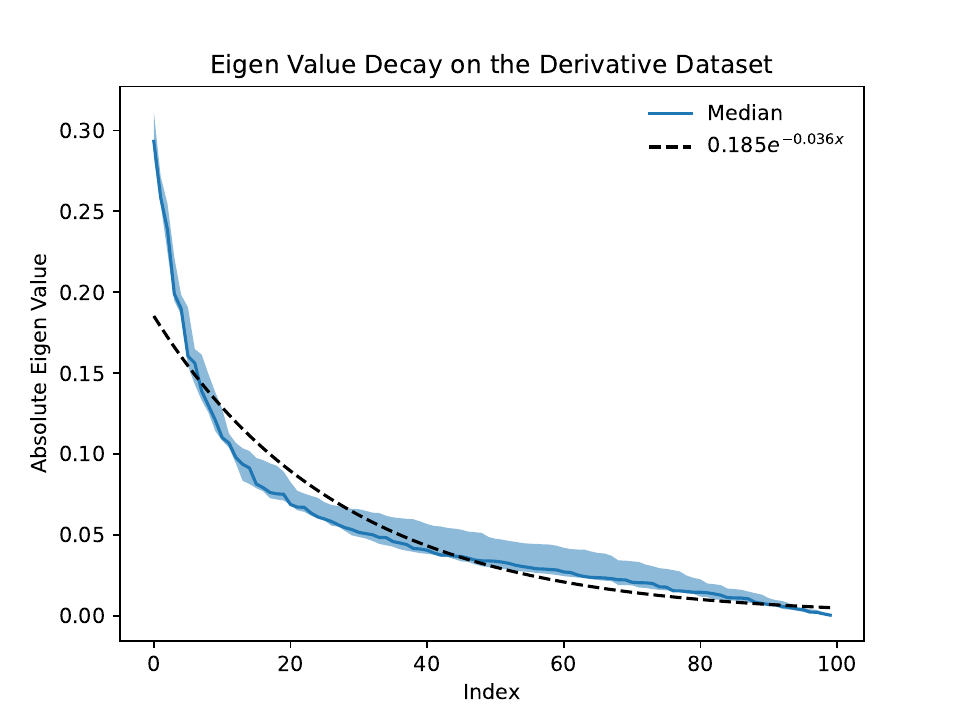}
\label{fig:derivative_eigen_decay1}
}
\hfill
\subfloat[Matrix singular value decay]{
\includegraphics[width=0.48\linewidth]{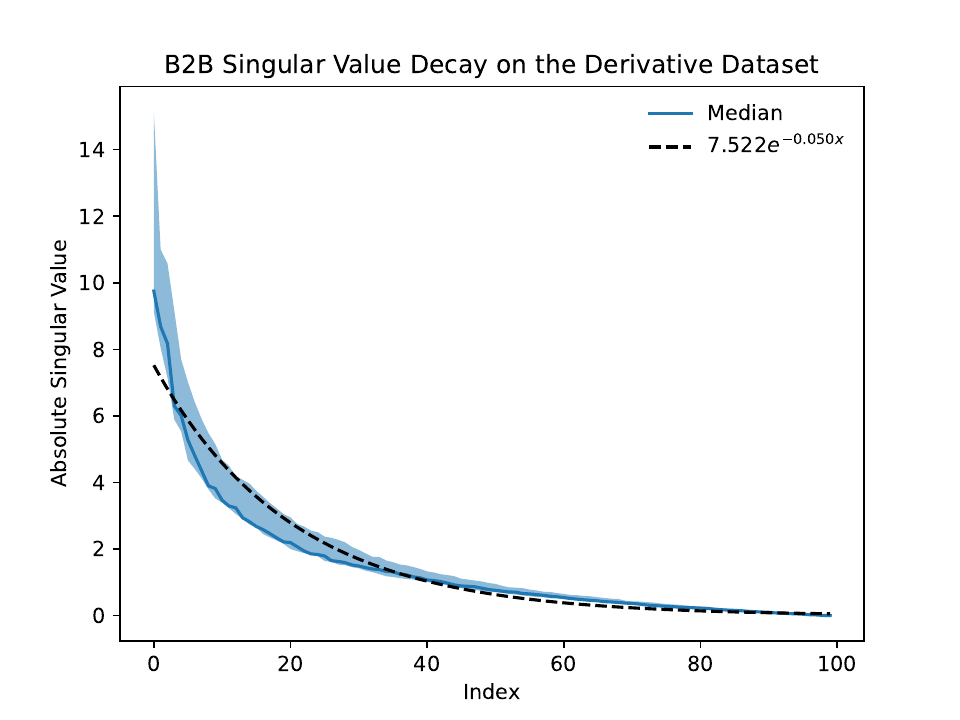}
\label{fig:derivative_eigen_decay3}
}
\caption{Decay curves for the derivative operator. (a) Shows the decay of eigenvalues from the ED approach. (b) Illustrates the decay of the singular values computed via the traditional singular value decomposition of the B2B matrix representation of the operator. Both plots demonstrate the rapid decay characteristic of compact operators.}
\label{fig:derivative_decay_comparison}
\end{figure}


\section{Results and Discussion}
\label{sec:results}

We demonstrate our proposed approach on six benchmark problems. We first show that our proposed function encoder-based operator learning approaches can efficiently learn linear operators and generalize to unseen inputs on two operator learning tasks: the derivative and anti-derivative operators. 
We then demonstrate the capabilities of our approach for modeling solutions to linear as well as nonlinear PDEs. We compare the accuracy of our approaches against one of the most popular operator learning frameworks, DeepONet, and some of its variants. 
The vanilla architecture of DeepONet requires the input functions to be sampled at fixed sensor locations across both test and train datasets. 
For comparison, we have modified our experiments accordingly to accommodate this requirement. However, we note that our approach offers greater flexibility by allowing random sampling from the input space. We considered three variants of DeepONet for comparison: 
\begin{itemize}
    \item \textbf{Vanilla DeepONet} \cite{lu2022comprehensive} - We considered the unstacked version of DeepONet. For the branch network of DeepONet, we have considered either a convolutional neural network (CNN) or a multi-layer perception, depending on the problem.
    \item \textbf{POD-DeepONet} \cite{kontolati2023influence} - 
    In this architecture, the trunk network basis is precomputed by performing proper orthogonal decomposition (POD) on the training data after first removing the mean. The POD basis is used as a non-trainable trunk network and only employs DNNs for the branch net to learn the coefficients of the POD basis. The discretization of the solution operator for the training and the testing samples must be the same. The solution operator cannot be interpolated over the domain.
    \item \textbf{Two-Stage DeepONet} \cite{lee2024training} - This framework involves a two-stage training in optimizing the trainable parameters of DeepONet. Instead of simultaneously training both the branch and trunk networks, which typically results in a high-dimensional, nonconvex optimization problem, this framework adopts a sequential strategy. First, it trains the trunk network independently. In the original formulation, the trunk network is trained to span the output function space as basis functions, where the coefficients are simultaneously learned through gradient descent. However, this is not amenable to batch training because the coefficients of \textit{all} functions must be updated at every gradient step. We slightly modify this method to compute the coefficients via least squares, as in \cite{ingebrand2024zeroshottransferneuralodes}, which is thus amenable to batch training. 
    Second, it proceeds to train the branch network, effectively decomposing the complex training task into two more manageable subtasks. Precisely, this is using the theory of function encoders to learn basis functions over the output function space, but it still learns the operator from the input data without formulating the input data as a function space, as in DeepONet. This approach is further enhanced by incorporating the Gram-Schmidt orthonormalization process, which significantly improves the stability and generalization ability of the network.
\end{itemize}

\noindent All approaches use approximately 500,000 trainable parameters. Training assumptions may be slightly different, \textit{e.g.}, B2B and two-stage DeepONet train on individual function spaces, and POD-DeepONet requires fixed output locations. All algorithms are evaluated through the same held-out data and loss function. The mean and standard deviation of the MSE over 10 independent training trials are reported in Table~\ref{tab:results}. 
The details of the training and testing sample split are reported in the appendix. All graphs show the median, first, and third quartiles over $10$ seeds.

\begin{table}[h]
    \caption{Mean squared error obtained for all applications presented in this work using Function Encoders and DeepONet.}
    \fontsize{5.9pt}{5.9pt}\selectfont
    \label{tab:results}
    \centering
    \begin{tabular}{lcccccc} 
        \toprule
        \multicolumn{1}{c}{Dataset} & \multicolumn{3}{c}{Function Encoders} & \multicolumn{3}{c}{DeepONet} \\
        \cmidrule{1-7}
            & B2B & SVD & Eigen & Vanilla & POD & Two-stage\\
        \midrule
        Anti-Derivative & $\mathbf{1.06\mathrm{e}{-02} \pm 1.62\mathrm{e}{-02}}$ & $1.31\mathrm{e}{+00} \pm 1.04\mathrm{e}{+00}$  & $2.02\mathrm{e}{+00} \pm 2.63\mathrm{e}{+00}$ & $4.48\mathrm{e}{-01} \pm 2.14\mathrm{e}{-01}$ & $1.96\mathrm{e}{+03} \pm 1.34\mathrm{e}{+02}$ & $2.20\mathrm{e}{-01} \pm 7.95\mathrm{e}{-02}$ \\
        Derivative  & $\mathbf{8.63\mathrm{e}{-04} \pm 6.60\mathrm{e}{-04}}$ & $3.33\mathrm{e}{-02} \pm 2.03\mathrm{e}{-02}$ &  $4.05\mathrm{e}{-03} \pm 3.45\mathrm{e}{-03}$ & $3.68\mathrm{e}{-03} \pm 2.57\mathrm{e}{-03}$ & $9.84\mathrm{e}{+00} \pm 6.27\mathrm{e}{-01}$ & $2.33\mathrm{e}{-03} \pm 1.01\mathrm{e}{-03}$ \\
        1D Darcy Flow & $\mathbf{1.74\mathrm{e}{-05} \pm 4.92\mathrm{e}{-06}}$ & $8.90\mathrm{e}{-04} \pm 8.03\mathrm{e}{-05}$ &  -  & $4.47\mathrm{e}{-05} \pm 8.94\mathrm{e}{-06}$ & $3.35\mathrm{e}{-05} \pm 8.79\mathrm{e}{-06}$ & $2.59\mathrm{e}{-04} \pm 8.43\mathrm{e}{-05}$ \\
        2D Darcy Flow  & $\mathbf{5.30\mathrm{e}{-03} \pm 1.19\mathrm{e}{-03}}$ & $2.89\mathrm{e}{-02} \pm 2.31\mathrm{e}{-03}$ &  -  & $2.68\mathrm{e}{-02} \pm 2.77\mathrm{e}{-03}$ & $2.50\mathrm{e}{-02} \pm 1.64\mathrm{e}{-03}$ & $1.33\mathrm{e}{-02} \pm 1.55\mathrm{e}{-03}$\\
        Elastic Plate & $\mathbf{6.30\mathrm{e}{-05} \pm 5.59\mathrm{e}{-05}}$  & $1.03\mathrm{e}{-01} \pm 1.83\mathrm{e}{-02}$  &  -  & $4.66\mathrm{e}{-04} \pm 8.16\mathrm{e}{-04}$ & $5.59\mathrm{e}{-04} \pm 1.15\mathrm{e}{-03}$ &  -  \\
        Parameterized Heat Equation  & $\mathbf{4.07\mathrm{e}{-04} \pm 2.86\mathrm{e}{-04}}^*$ & $2.27\mathrm{e}{-01} \pm 2.35\mathrm{e}{-02}$ &  -  & $6.00\mathrm{e}{-04} \pm 1.09\mathrm{e}{-03}$ & $8.88\mathrm{e}{-01} \pm 1.15\mathrm{e}{-01}$ &  -  \\
        Burger's Equation & $\mathbf{5.07\mathrm{e}{-04} \pm 1.93\mathrm{e}{-04}}$ & $1.01\mathrm{e}{-01} \pm 1.16\mathrm{e}{-02}$ & -  & $2.16\mathrm{e}{-03} \pm 5.59\mathrm{e}{-04}$ & $1.94\mathrm{e}{+00} \pm 1.76\mathrm{e}{-01}$ & $2.03\mathrm{e}{+00} \pm 1.78\mathrm{e}{-01}$ \\
        \bottomrule
    \end{tabular}
    \caption*{*While the mean of prediction errors for B2B is lower than DeepONet for the parameterized heat equation dataset, the median is higher as \\ shown in Figure \ref{fig:heat}.}

\end{table}

\subsection{Derivative and Anti-Derivative Operators}
\label{subsec:example1}

To demonstrate our approach, we first considered a pedagogical problem of learning the anti-derivative operator for a class of functions. Consider a 1D problem, defined as:
\begin{equation}
    \frac{ds(x)}{dx} = u(x), \quad x \in [-10,10], \quad s(0) = 0,
\end{equation}
where our aim is to compute the anti-derivative operator $\mathcal{T}$, such that
\begin{equation}
    \mathcal{T} u (x) = s(x=0) + \int_{0}^{x} u(t) dt, 
\end{equation}
where $u(x)$ is a quadratic polynomial. The anti-derivative operator aims to learn the mapping from the space of quadratic polynomials (degree two) to cubic polynomials (degree three). Using B2B, we first compute two sets of basis functions, $g_{1}, \ldots, g_{k}$ to span the input function space and $h_{1}, \ldots, h_{\ell}$ to span the output function space. We then calculate a transformation that maps the coefficients of the input basis functions to those of the output basis functions. Given the linearity of the anti-derivative operator, this transformation is represented by a matrix, which we obtain by solving the least-squares problem in \eqref{eqn: linear operator least squares}. 

We also considered the derivative operator, where the input function space consists of cubic polynomials (degree three), while the output function space comprises quadratic polynomials (degree two). As with the anti-derivative operator, the derivative operator is linear, allowing us to compute the matrix transformation through a least-squares optimization as in \eqref{eqn: linear operator least squares}. Qualitative results showing the \emph{worst-case} evaluation from our dataset are presented in Figure \ref{fig:qualitative_linear}. Quantitative results are presented in Figures \ref{fig:integral} and \ref{fig:derivative}, respectively.

\begin{figure}[h]
    \begin{minipage}{.48\linewidth}
        \includegraphics[width=1.0\linewidth]{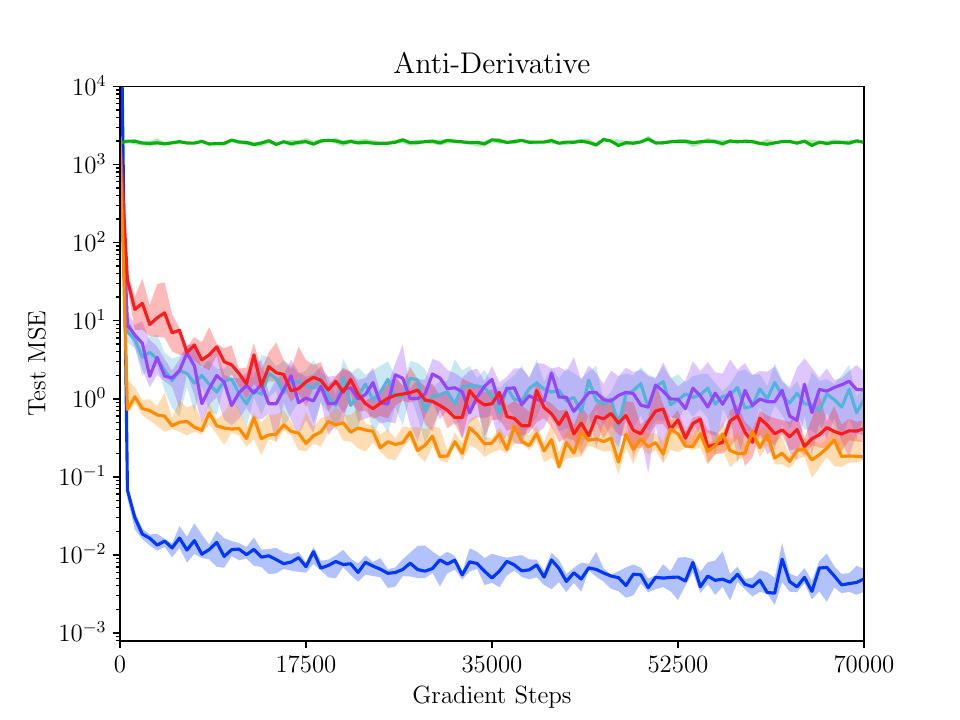}
        \caption{\textit{Training curves on the anti-derivative dataset.} This figure plots the test MSE for each algorithm during training. Notably, B2B achieves orders of magnitude better performance than the baselines. }
        \label{fig:integral}
    \end{minipage}%
    \hfill
    \begin{minipage}{.48\linewidth}
        \includegraphics[width=1.0\linewidth]{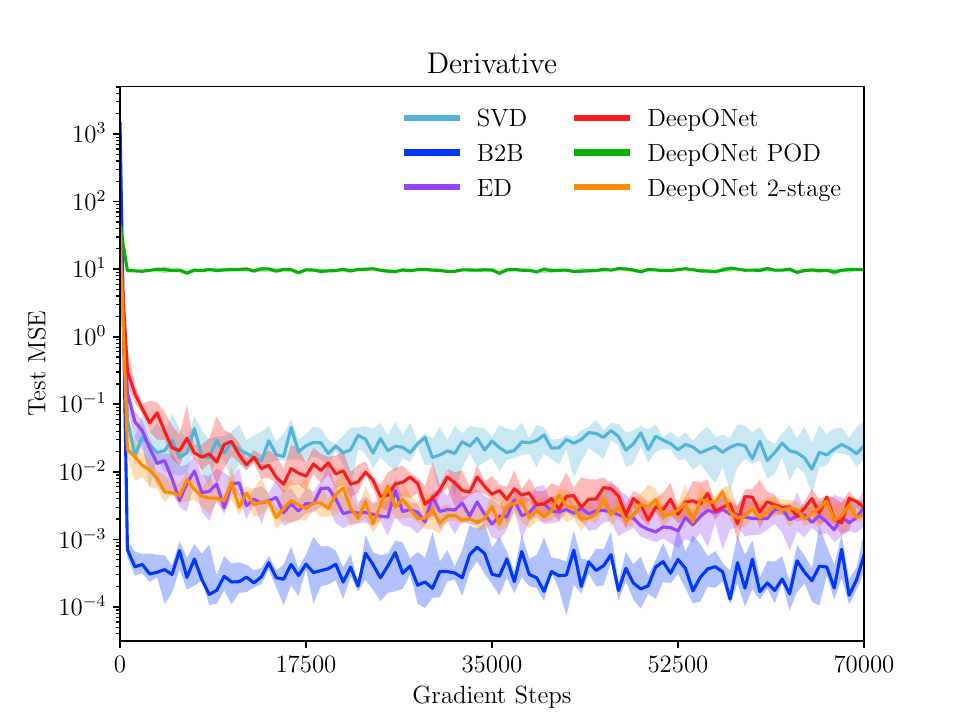}
        \caption{\textit{Training curves on the derivative dataset.} This figure plots the test MSE for each algorithm during training. B2B achieves an order of magnitude better performance than the baselines. }
        \label{fig:derivative}
    \end{minipage}%
\end{figure}

\begin{figure}[h!]
    \centering
    \subfloat[\textit{Worst-case sample for the anti-derivative dataset.} This figure shows an example of a function $f \in \mathcal{G}$ and the corresponding $\mathcal{T}f$ for the anti-derivative dataset. The function is chosen to have the worst-case performance for B2B. Left: $f$ and B2B's approximation $\hat{f}$. B2B can estimate $f$ from $D$ because it has learned basis functions over the input function space. Center-left: the absolute error $|f-\hat{f}|$. Center-right: $\mathcal{T}f$ and the approximations for both B2B and DeepONet. Right: the absolute error $|\mathcal{T}f - \hat{\mathcal{T}}f|$ for both approaches. ]{
    \includegraphics[width=\linewidth]{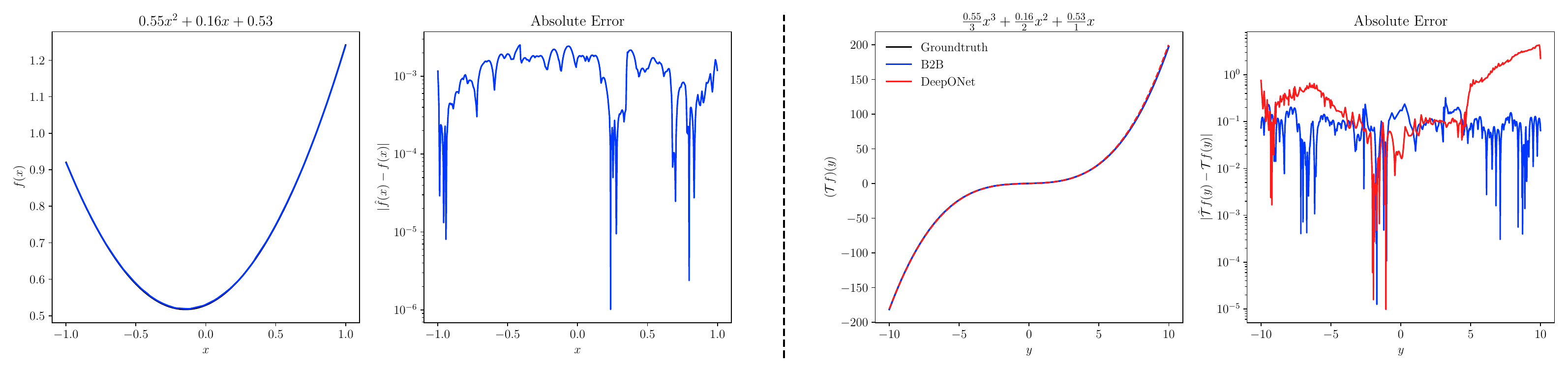}
    }
    \vspace{1em} 
    \subfloat[\textit{Worst-case sample for the derivative dataset.} This figure shows an example of a function $f \in \mathcal{G}$ and the corresponding $\mathcal{T}f$ for the derivative dataset. The function is chosen to have the worst-case performance for B2B. Left: $f$ and B2B's approximation $\hat{f}$. B2B can estimate $f$ from $D$ because it has learned basis functions over the input function space. Center-left: the absolute error $|f-\hat{f}|$. Center-right: $\mathcal{T}f$ and the approximations for both B2B and DeepONet. Right: the absolute error $|\mathcal{T}f - \hat{\mathcal{T}}f|$ for both approaches. ]{
    \includegraphics[width=1.0\linewidth]{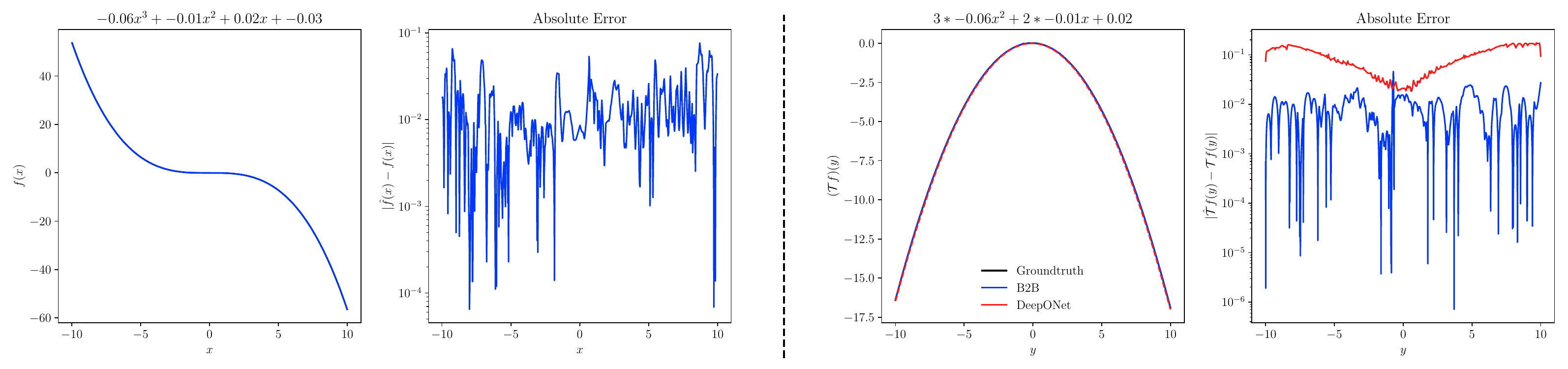}}
    \caption{\textit{Qualitative comparison between B2B and DeepONet.}}
    \label{fig:qualitative_linear}
\end{figure}

Our analysis of the anti-derivative and the derivative examples demonstrates that B2B significantly outperforms the baselines in both convergence speed and asymptotic performance. This stems from B2B's ability to exploit the linear structure of the operator by directly computing the transformation matrix via least-squares optimization.
SVD and ED also demonstrate comparable performance with the baseline approaches on these linear examples, as they are inherently designed to learn linear operators. However, their convergence is slower compared to B2B because they must learn properly aligned basis functions rather than directly computing the operator. 
Overall, these approaches effectively capitalize on the operator's linearity, enabling an efficient and precise computation of the transformation between input and output function spaces.

In contrast, vanilla DeepONet exhibits less accurate asymptotic performance and slower convergence for learning the derivative and anti-derivative operators. While two-stage DeepONet shows improved results, the error on the derivative operator dataset is an order of magnitude higher than B2B, which we suspect is due to its inability to fully exploit the linear structure of the operator. For the anti-derivative operator, the accuracy of B2B is two orders of magnitude higher than competing approaches. 
Notably, POD-DeepONet completely fails in these examples, and we can see that the MSE on the test dataset is two to three orders of magnitude higher than all other approaches.

Next, to demonstrate the robustness of B2B, we conduct out-of-distribution (OOD), linearity, and homogeneity tests on the derivative and the anti-derivative operators. 
For the OOD test, we sample polynomials that are much larger in magnitude than anything seen during training, but still of the same degree. 
Since the operator is linear, the algorithms should be able to generalize to the entire linear span of the input space.
For the linearity test, we conduct experiments to check if the operator commutes with a linear combination, defined as: $\mathcal{T}(af(x)+bg(x)) = a\mathcal{T}f(x) + b\mathcal{T}g(x)$, where $a$ and $b$ are random scalars, and $f(x)$ and $g(x)$ are polynomial functions. 
In other words, we compute the coefficients of the output space basis functions using data from $f$ and $g$ and then test if a linear combination of these coefficients can reproduce the operator action on the linear combination of $f$ and $g$. 
The homogeneity test is analogous with $\mathcal{T}(af(x)) = a\mathcal{T}f(x)$. We perform this analysis on B2B and vanilla DeepONet. The results, presented in Table~\ref{tab:combined_results} demonstrate that B2B outperforms DeepONet by one to two orders of magnitude in the OOD test. One example of the OOD test is shown in Figure~\ref{fig:linearity_test}.  Furthermore, DeepONet fails on the linearity and homogeneity tests, meaning the test error increases by several orders of magnitude, indicating that it is unable to generalize well to the linear span of the data. We suspect this is largely because the architecture of DeepONet and its variants is not designed to exploit the linear structure of the operator. In contrast, from Theorem \ref{theorem}, B2B has guaranteed generalization with the linear span of the input function space for linear operators.

\begin{figure}[h]
    \centering

    \includegraphics[width=1.0\linewidth]{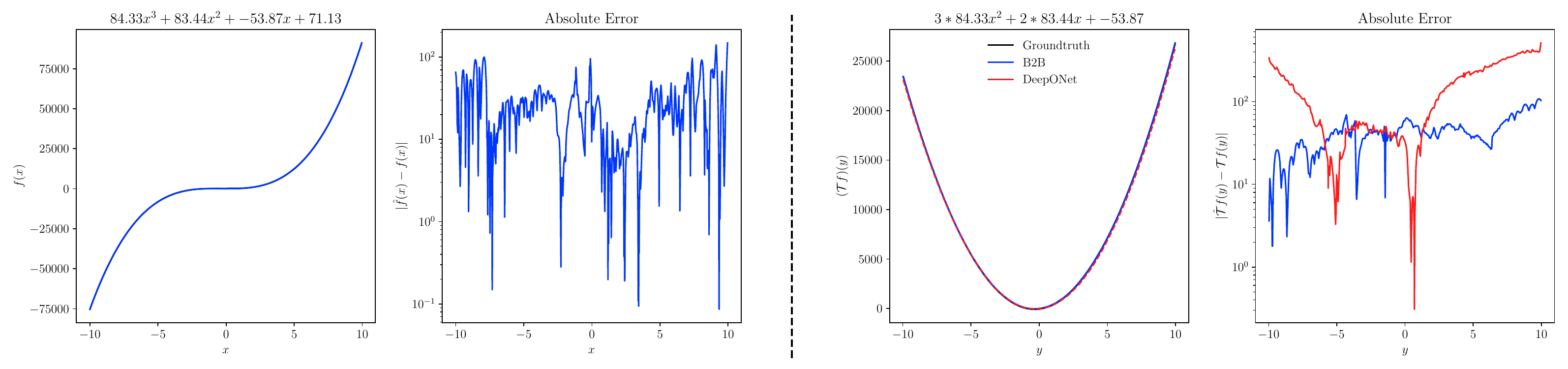}
    \caption{\textit{Worst-case sample for the OOD derivative test.} This figure shows an example of a function $f \in \mathcal{G}$ and the corresponding $\mathcal{T}f$ for the derivative dataset. The function is chosen to have the worst-case performance for B2B, and the function is far outside of the training set. Left: $f$ and B2B's approximation $\hat{f}$. Center-left: the absolute error $|f-\hat{f}|$. Center-right: $\mathcal{T}f$ and the approximations for both B2B and DeepONet. Right: the absolute error $|\mathcal{T}f - \hat{\mathcal{T}}f|$ for both approaches. }
    \label{fig:linearity_test}
\end{figure}

\begin{table}[h]
    \caption{Out-of-distribution, linearity, and homogeneity analysis on the derivative and anti-derivative datasets.}
    \label{tab:combined_results}
    \centering
    \begin{tabular}{lccc} 
        \toprule
        \multicolumn{4}{c}{Derivative Dataset} \\
        \midrule
        & OOD Function MSE & Linearity Function MSE & Homogeneity Function MSE \\
        \midrule
        B2B & $\mathbf{6.22\mathrm{e}{+02} \pm 1.28\mathrm{e}{+03}}$ & $\mathbf{3.82\mathrm{e}{-03} \pm 1.08\mathrm{e}{-02}}$ & $\mathbf{1.89\mathrm{e}{-03} \pm 5.50\mathrm{e}{-03}}$ \\
        DeepONet & $9.04\mathrm{e}{+03} \pm 2.82\mathrm{e}{+04}$ & $5.83\mathrm{e}{+02} \pm 1.71\mathrm{e}{+03}$ & $2.90\mathrm{e}{+02} \pm 9.35\mathrm{e}{+02}$ \\
        \midrule
        \multicolumn{4}{c}{Anti-Derivative Dataset} \\
        \midrule
        & OOD Function MSE & Linearity Function MSE & Homogeneity Function MSE \\
        B2B & $\mathbf{7.50\mathrm{e}{-03} \pm 2.51\mathrm{e}{-02}}$ & $\mathbf{1.24\mathrm{e}{-01} \pm 5.24\mathrm{e}{-01}}$ &  $\mathbf{6.16\mathrm{e}{-02} \pm 2.75\mathrm{e}{-01}}$ \\
        DeepONet & $4.58\mathrm{e}{-01} \pm 1.60\mathrm{e}{+00}$ & $3.88\mathrm{e}{+04} \pm 1.30\mathrm{e}{+05}$ & $1.94\mathrm{e}{+04} \pm 6.88\mathrm{e}{+04}$ \\
        \bottomrule
    \end{tabular}
\end{table}

We also visualize the loss landscape of the gradients of the function spaces to gain insights into the optimization behavior of our approach versus DeepONet. We first sample 100 functions from the input space to generate the loss landscape. For each function, we compute the loss gradient with respect to the neural network parameters. We then construct a $100 \times P$ matrix from these gradients, where $P$ is the total number of parameters. Next, we perform Principal Component Analysis (PCA) to identify the two most significant gradient directions. We normalize these directions based on the magnitude of the coefficients for each layer to account for the scale invariance of ReLU activations. Finally, we compute the loss for models with parameters $\theta = \theta^* + \alpha p_1 + \beta p_2$, where $\theta^*$ is the optimal parameter set, $p_1$ and $p_2$ are the principal component vectors, and $\alpha$ and $\beta$ range from -0.01 to 0.01. For more detail on this procedure, see \cite{li2018visualizinglosslandscapeneural}. See Figure~\ref{fig:loss-landscape-derivative} for a visualization of the loss landscape for the derivative dataset.

B2B demonstrates smooth and well-behaved loss landscapes for both the input and output function spaces. These landscapes exhibit gradual, continuous slopes with a clear bowl-like structure, indicating an approximately convex optimization problem with stable gradients during training. 
In contrast, the DeepONet loss landscape displays a sharp, spike-like feature at its center, suggesting potential instabilities in the optimization process. The sharp gradient at the center could lead to challenges in convergence, as small perturbations in the parameters might result in large changes in the loss value. This characteristic may make the optimization process more sensitive to the learning rate and initial conditions, potentially requiring careful tuning to achieve good performance.

The difference between these loss landscapes highlights a key advantage of our approach. 
The results suggest that decomposing the operator learning problem into two separate function spaces yields better behaved optimization problems than approximating the operator end-to-end. 
This difference likely explains the improved convergence speed of B2B. 

\begin{figure}[h!]
    \centering
    \includegraphics[width=\linewidth]{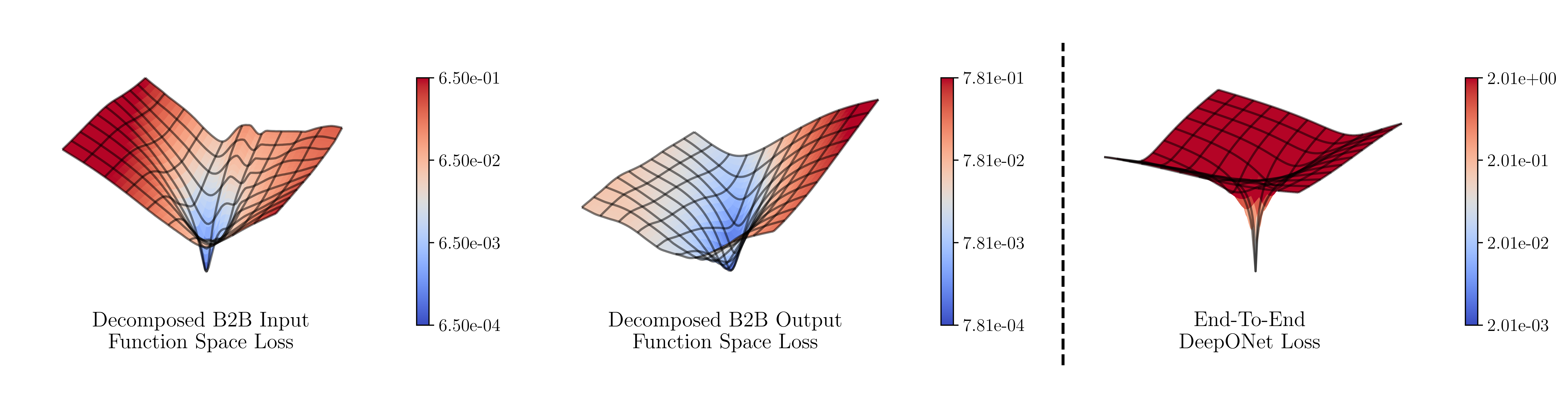}
    \caption{\textit{Visualization of the loss landscape in two main PCA directions on the derivative dataset}. The two figures show the decomposed loss landscapes for the input and output function spaces, respectively, for the B2B algorithm. The right figure shows the loss landscape for the full end-to-end loss of DeepONet. Each color bar ranges from the locally optimal loss to 1000x this amount. Therefore, the dark red regions indicate a 1000x drop in performance. The sharp loss landscape and abrupt color change for DeepONet therefore suggests that a small change in parameters leads to a drastic decrease in performance, highlighting the sensitivity of this approach. In contrast, both decomposed B2B losses have larger blue regions with a smoother shape, indicating relative stability in the training procedure.   }
    \label{fig:loss-landscape-derivative}
\end{figure}

\begin{figure}[h!]
    \centering
    \includegraphics[width=0.75\linewidth]{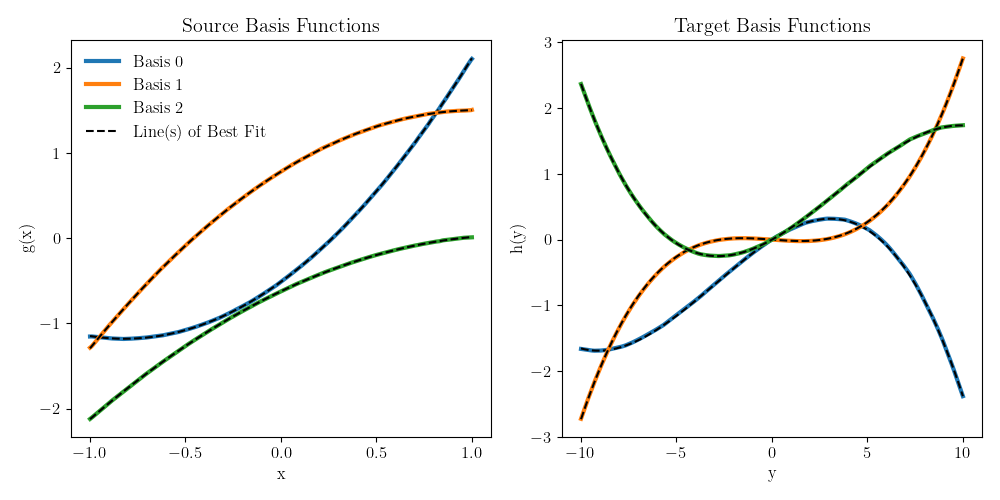}
    \caption{\textit{Visualization of a set of learned basis functions for the anti-derivative dataset.} The input function space is quadratics, and we therefore expect the basis functions on the left to be quadratic. On the right, we expect cubic functions with a 0 for the constant term. We also plot the line of best fit for each basis function, where the line of best fit is constrained to the desired class. The tight overlap between each basis function and its fit suggests that the learned basis functions are indeed quadratic and cubic, respectively. }
    \label{fig:basis_functions}
\end{figure}

Lastly, we may visualize the basis functions on these simple examples to determine if the learned functions align with our expectations. In the anti-derivative dataset, the input function space is the three-dimensional space of quadratic polynomials, and the output function space is the three-dimensional subspace of cubic polynomials with 0 for the constant term. If we use a properly specified basis for each space, i.e. three basis functions, we would expect each basis function to lie in the respective spaces. We plot an example of learned basis functions in Figure \ref{fig:basis_functions}. The results indicate that learned basis functions do indeed converge to the space; Note this is the case because the number of basis functions is well-specified. If we use an overspecified basis, i.e. more than three basis functions, each individual basis function may not lie within the space while, collectively, they span the space. Nonetheless, an overspecified basis will achieve the same performance, so its advisable to err on the side of caution and use a overly large number of basis functions.

\subsection{1D Darcy Flow}
\label{subsec:example2}

In this example, we aim to learn the nonlinear Darcy operator for a 1D system. A variant of the nonlinear 1D Darcy's equation is written as:
\begin{equation}
    \frac{ds}{dx}
    \left(
    -\kappa(s(x)) \frac{d s}{dx}
    \right)
     = u(x), \quad x\in[0, 1], 
\end{equation}
where the solution-dependent permeability is $\kappa(s(x)) = 0.2 + s^2(x)$ and the input term is a Gaussian random field $s(x)\sim \mathcal{GP}$ defined as $s(x) \sim \mathrm{GP}(0, k(x, x'))$ such that $k(x, x') = \sigma^2 \exp(- {\|x - x'\|^2}/{(2\ell_x^2)})$, where $\ell_{x} = 0.04, \ \sigma^2 = 1.0$. Homogeneous Dirichlet boundary conditions $s=0$ are considered at the domain boundaries. 
We adapt this problem from \cite{bahmani2024resolution}, using a new dataset generated via the finite difference method.
The training data uses fixed sampling points for DeepONet compatibility, though our proposed approaches support arbitrary sampling locations.

The convergence of the test error over gradient steps is presented in Figure~\ref{fig:darcy}. B2B achieves the lowest test error and the fastest convergence among all the tested approaches. The accuracy of B2B and vanilla DeepONet are shown for one sample in Figure~\ref{fig:1dDarcy}.

\begin{figure}[!b]
    \begin{minipage}[t]{.48\linewidth}
    \includegraphics[width=1.0\linewidth]{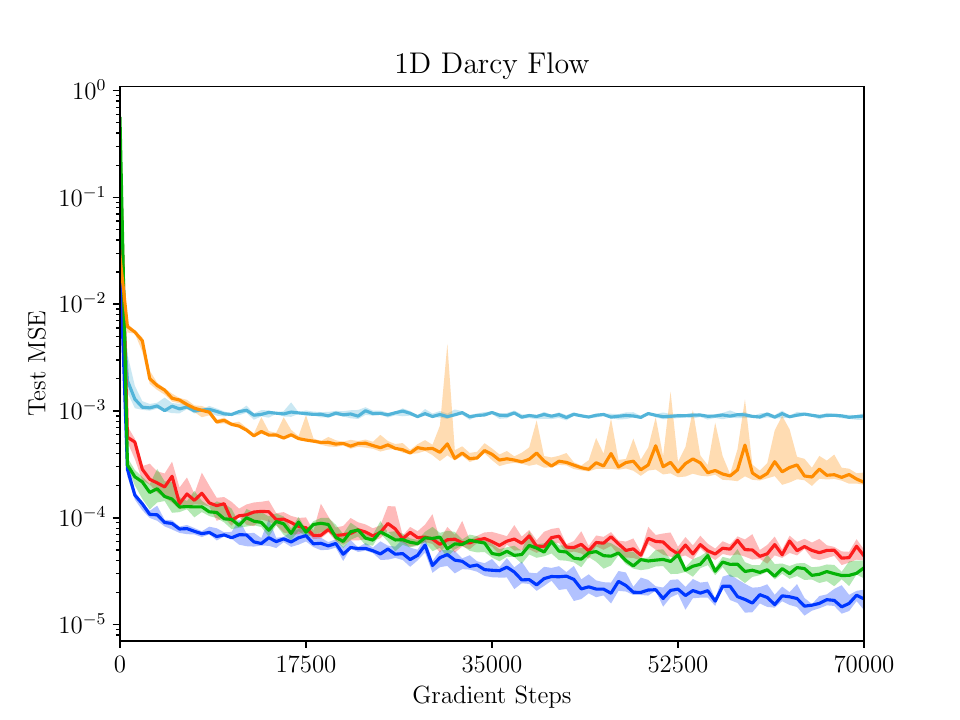}
    \caption{\textit{Training curves on the 1D Darcy flow dataset.}  This figure plots the test MSE for each algorithm during training. B2B achieves slightly better convergence speed and performance relative to other approaches. }
    
    \label{fig:darcy}
    \end{minipage}%
    \hfill
    \begin{minipage}[t]{.48\linewidth}       \includegraphics[width=1.0\linewidth]{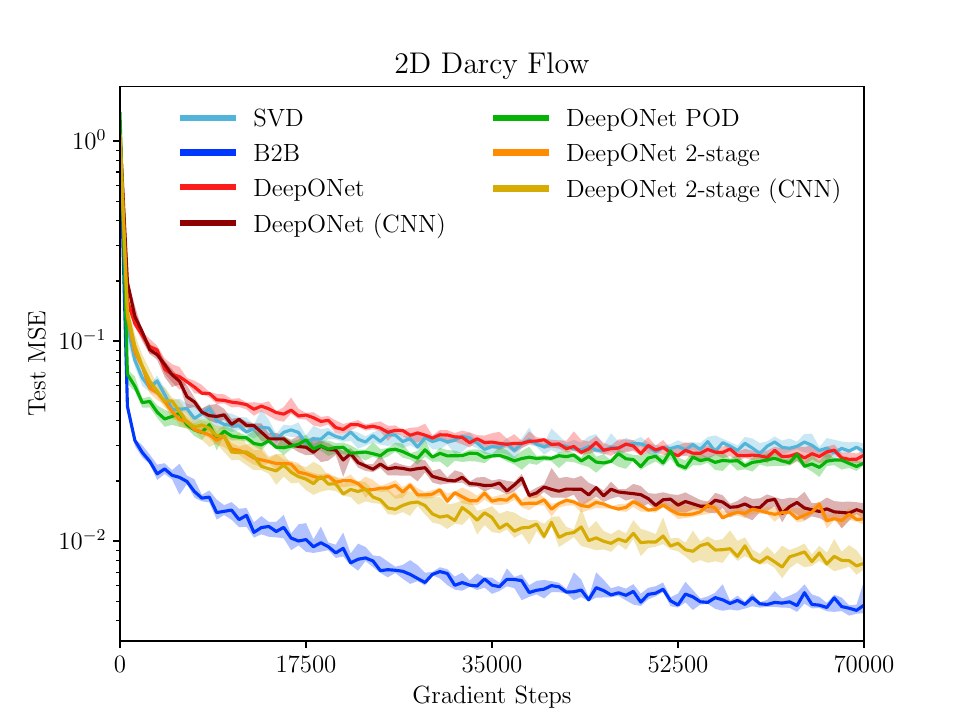}
    \caption{\textit{Training curves on the 2D, L-shaped Darcy flow dataset.}  This figure plots the test MSE for each algorithm during training. B2B achieves better convergence speed and performance on this dataset. Interestingly, DeepONet achieves a similar performance to SVD, even though this operator is nonlinear and SVD can, at best, learn a linear approximation. }
    \label{fig:lshaped}
    \end{minipage}%
\end{figure}

\begin{figure}[h]
    \centering
    \includegraphics[width=1.0\linewidth]{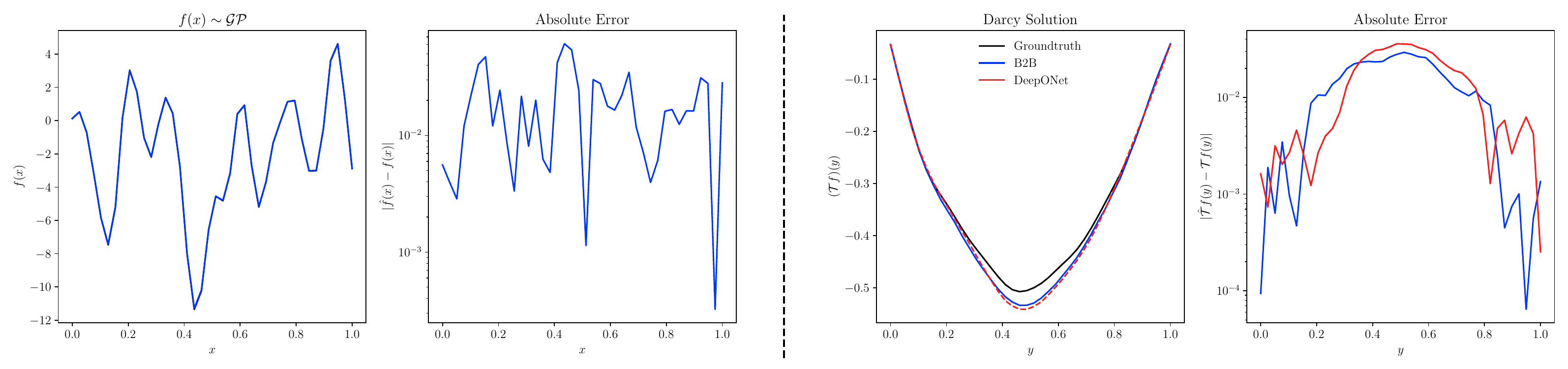}

    \caption{\textit{Worst-case sample for the 1D Darcy flow dataset.} This figure shows an example of a function $f \in \mathcal{G}$ and the corresponding $\mathcal{T}f$ for the 1D Darcy flow dataset. The function is chosen to have the worst-case performance for B2B. Left: $f$ and B2B's approximation $\hat{f}$. Center-left: the absolute error $|f-\hat{f}|$. Center-right: $\mathcal{T}f$ and the approximations for both B2B and DeepONet. Right: the absolute error $|\mathcal{T}f - \hat{\mathcal{T}}f|$ for both approaches. }
    \label{fig:1dDarcy}
\end{figure}

\begin{figure}[h]
    \centering
    \subfloat[Basis-to-Basis]{
    \includegraphics[width=\linewidth]{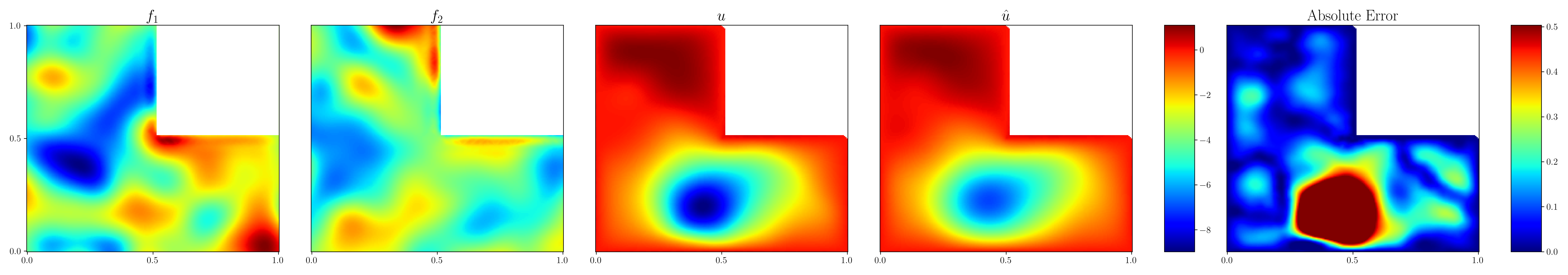}}
    \vspace{1em} 
    \subfloat[DeepONet]{
    \includegraphics[width=\linewidth]{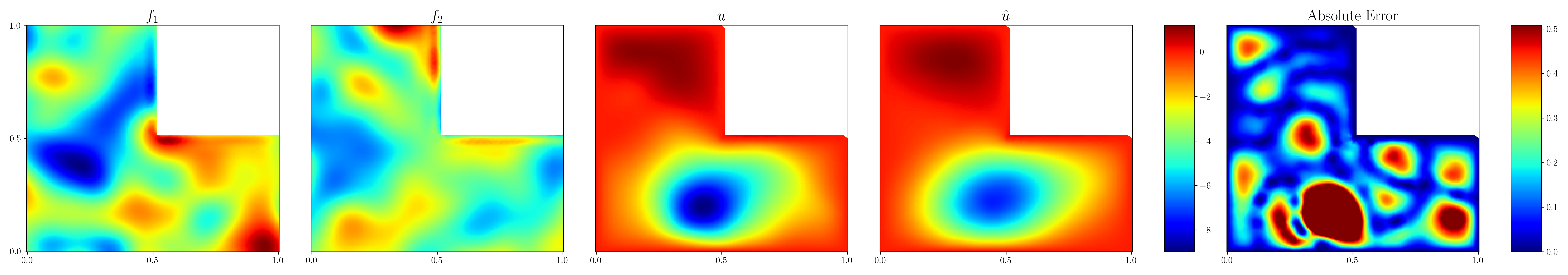}}
    \caption{ \textit{Worst-case sample for the 2D Darcy flow dataset.} This figure compares B2B (top) and DeepONet (bottom) on the worst-case sample for  B2B from the 2D Darcy flow dataset.  Left and center-left: two dimensions of the input function. Center: the ground truth output field. Center-right: the corresponding approximation. Right: the absolute error between the estimated and ground truth output. The color bar ranges from $0$ to $5\%$ of the maximum difference in output value, with dark red indicating $>\!5\%$ error.}
    \label{fig:2dDarcy}
\end{figure}

\subsection{2D Darcy Flow}
\label{subsec:example3}

In this example, we consider a Darcy flow on a 2D L-shaped domain to demonstrate the effectiveness of our approach on an irregularly-shaped domain. The problem is defined as:
\begin{align}
    \label{eqn:poisson_PDE}
    \nabla\cdot\left(k(x)\nabla u(x)\right) + f(x) &= 0,\;\;\; x = (x,y)\in\Omega:=(0,1)^2\backslash[0.5,1)^2 \\
    u(x) &= 0, \quad x\in\partial\Omega,
\end{align}
where $k(x)$ is a spatially varying permeability field, $u(x)$ is the hydraulic head, and $f(x)$ is a spatially varying force vector. Triangular elements are used to discretize the L-shaped domain $\Omega$ to run the finite element solver to generate the labeled dataset. We aim to learn the nonlinear operator, $\mathcal{T}$ that maps the permeability field and the source vector to the flow pressure, i.e.\ $\mathcal{T}: [k(\boldsymbol{x}), f(\boldsymbol x)] \mapsto u(\boldsymbol x)$. This problem has been adopted from \cite{kahana2023geometry}. The goal of this experiment is to demonstrate the applicability of our proposed approach to learning the mapping of multiple input fields to the solution operator. 

To train DeepONet with a CNN branch network, we require the input functions to be defined on a structured grid. To account for this, we discretized the L-shaped domain on a uniform $31 \times 31$ grid and padded the top right portion (the exterior of the domain) with zeros. However, the solution space is defined on an irregular grid which is discretized with 450 nodal points. 

The convergence of the test error is shown in Figure~\ref{fig:lshaped}. Our results demonstrate that B2B significantly outperforms the baseline approaches, exhibiting both superior accuracy and lower variance in prediction errors. We explored various versions of DeepONet applicable to this dataset, all of which showed improved performance compared to the base DeepONet algorithm. Interestingly, we observed that the SVD algorithm slightly outperformed vanilla DeepONet, despite the nonlinear nature of the operator. This suggests that DeepONet may be struggling to learn the operator effectively, while the strictly linear approximation of the operator obtained via the SVD approach yields reasonable performance. Of all the DeepONet variants, two-stage DeepONet with a CNN branch network has the lowest test error. A qualitative sample depicting the predictions of DeepONet and B2B is shown in Figure~\ref{fig:2dDarcy}.

\subsection{Elastic Plate}
\label{subsec:example4}

In this example, we consider a thin rectangular plate subjected to in-plane loading that is modeled as a two-dimensional problem of plane stress elasticity. The equations are given by:
\label{subsec:elasticity}
\begin{equation}
\label{eq:elasticity}
    \nabla \cdot {\sigma} + \boldsymbol{f}(\boldsymbol x) = 0,\hspace{10pt} \boldsymbol x = (x,y),
\end{equation}
\begin{equation*}
\label{eq:elasticity-BC}
    (u, v) = 0, \hspace{10pt} \forall\;\; x=0,
\end{equation*}
where $\boldsymbol{\sigma}$ is the Cauchy stress tensor, $\boldsymbol{f}$ is the body force, $u(\boldsymbol x)$ and $v(\boldsymbol x)$ represent the $x$- and $y$-displacement, respectively. In addition, $E$, and $\nu$ represent the Young modulus and Poisson ratio of the material, respectively. The relation between stress and displacement in plane stress conditions is defined as:
\begin{equation}
    \begin{Bmatrix}
        \sigma_{xx}\\ \sigma_{yy}\\ \tau_{xy}
    \end{Bmatrix} = \frac{E}{1-\nu^2}
    \begin{bmatrix}
        1 & \nu & 0 \\ \nu & 1 & 0 \\ 0 & 0 & \frac{1-\nu}{2}
    \end{bmatrix} \times     
    \begin{Bmatrix}
        \frac{\partial u}{\partial x} \\ \frac{\partial v}{\partial y} \\ \frac{\partial u}{\partial y} + \frac{\partial v}{\partial x}
    \end{Bmatrix}.
\end{equation}

We model the loading conditions $f(\boldsymbol x)$ applied to the right edge of the plate as a Gaussian random field. We aim to learn the mapping $\mathcal{T} :f(\boldsymbol x)\rightarrow [ \boldsymbol u(\boldsymbol x), \boldsymbol v(\boldsymbol x)]$ from the random boundary load to the displacement field ($\boldsymbol u$: $x$-displacement and $\boldsymbol v$: $y$-displacement). The goal of this example is to demonstrate the applicability of our proposed approach to learning the mapping to the solution operator for more than one field. The dataset for this problem has been adopted from \cite{goswami2022deep}.

The test error convergence plots are shown in Figure~\ref{fig:elastic}. The plot demonstrates that the B2B operator achieves the lowest test error. For this dataset, we evaluate against DeepONet and POD-DeepONet, both of which achieve similar accuracy. 
Two-stage DeepONet cannot be evaluated fairly in this problem because its orthonormalization procedure is not easily extended to vector-valued outputs. Therefore, this would require one set of basis functions per output dimension, which would not make for a fair comparison with the other approaches.
A qualitative sample depicting the predictions of DeepONet and B2B is shown in Figure~\ref{fig:elasticity_error}.

\subsection{Parameterized Heat Equation}
\label{subsec:example5}

In this example, we consider a parameterized heat equation in a two-dimensional domain. The governing equation is given by:
\begin{equation}
    \frac{\partial T(x,y,t)}{\partial t} = \alpha\left(\frac{\partial^2 T(x,y,t)}{\partial x^2}+\frac{\partial^2 T(x,y,t)}{\partial y^2} \right) \;\; \forall \quad(x,y,t)\in[0,1]\times[0,1]\times[0,1],
\end{equation}
where $T$ denotes temperature, $x$, $y$, and $t$ denote the spatiotemporal coordinates, and $\alpha$ denotes the parameterized thermal diffusivity. We consider a two-dimensional plate with the edges set to temperature $T=0$ and within the domain, the temperature is set to randomly chosen constant temperature. Accordingly, the initial and boundary conditions are expressed as:
\begin{align}
    T(x,y,0) &= T_{0}, \quad 0<x<1, 0<y<1,\\
    T(0,y,t) = T(1,y,t) &= 0, \quad 0< y<1, 0\leq t\leq1,\\
    T(x,0,t) = T(x,1,t) &= 0, \quad 0< x<1, 0\leq t\leq1,
\end{align}
where $T_0$ is a predefined constant temperature. The goal is to map the initial temperature of the plate $T_{0}$ and the thermal diffusivity $\alpha$ to the evolving temperature field, i.e.\ $\mathcal{T} : [T_{0}, \alpha] \rightarrow T(t)$. In contrast to the other examples, this example does not involve the mapping of the input term as an input \emph{function}. Instead, the output solution space is parameterized by $\alpha$ and $T_0$. To generate a labeled dataset, the initial temperature $T_{0}$ is considered in the range $[0,1]$, while the thermal diffusivity spans several magnitudes within $[10^{-2}, 10^{0}]$. The dataset is adapted from \cite{mandl2024separable}.

\begin{figure}[!h]
    \begin{minipage}[t]{.48\linewidth}
        \includegraphics[width=1.0\linewidth]{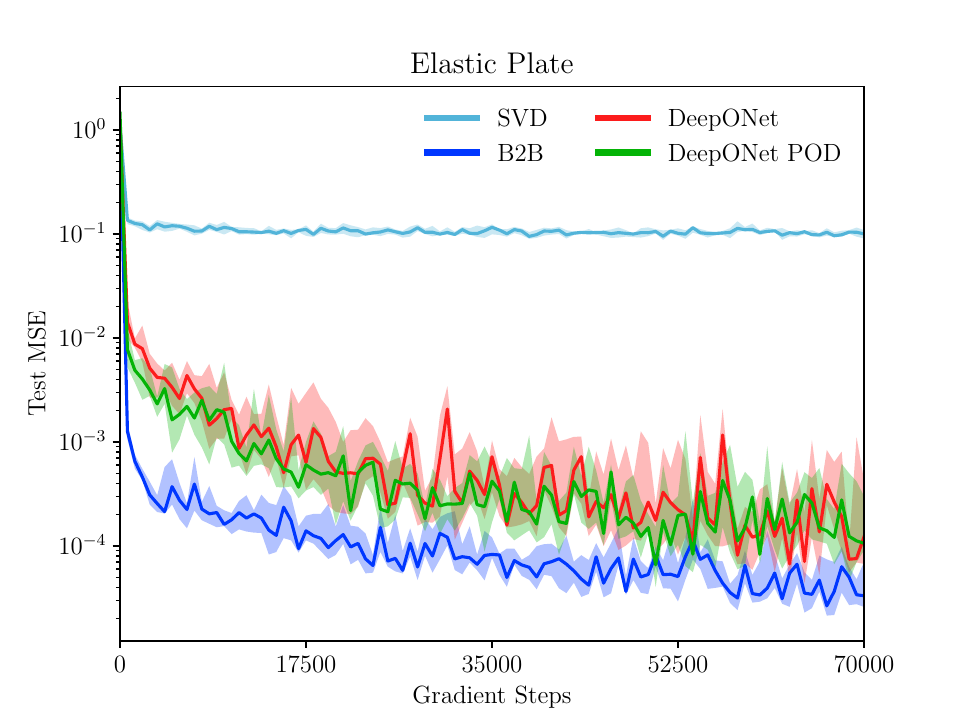}
        \caption{\textit{Training curves on the elastic plate dataset.} This figure plots the test MSE for each algorithm during training. B2B achieves better convergence speed and performance relative to other approaches. }
        \label{fig:elastic}
    \end{minipage}%
    \hfill
    \begin{minipage}[t]{.48\linewidth}
        \includegraphics[width=1.0\linewidth]{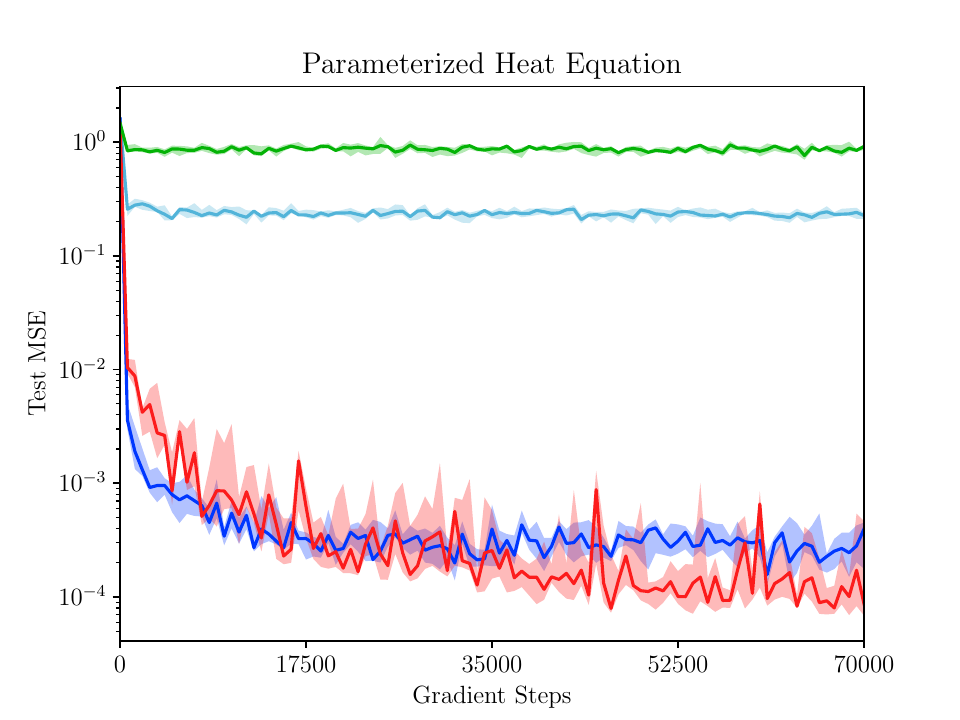}
        \caption{\textit{Training curves on the parameterized heat equation dataset.} This figure plots the test MSE for each algorithm during training. DeepONet achieves the best performance on this dataset, although it also demonstrates some instability with the occasional drop in performance to be on par with B2B. }
        \label{fig:heat}
    \end{minipage}%
\end{figure}
\begin{figure}[!h]
    \centering
    \subfloat[Basis-to-Basis]{
    \includegraphics[width=0.9\linewidth]{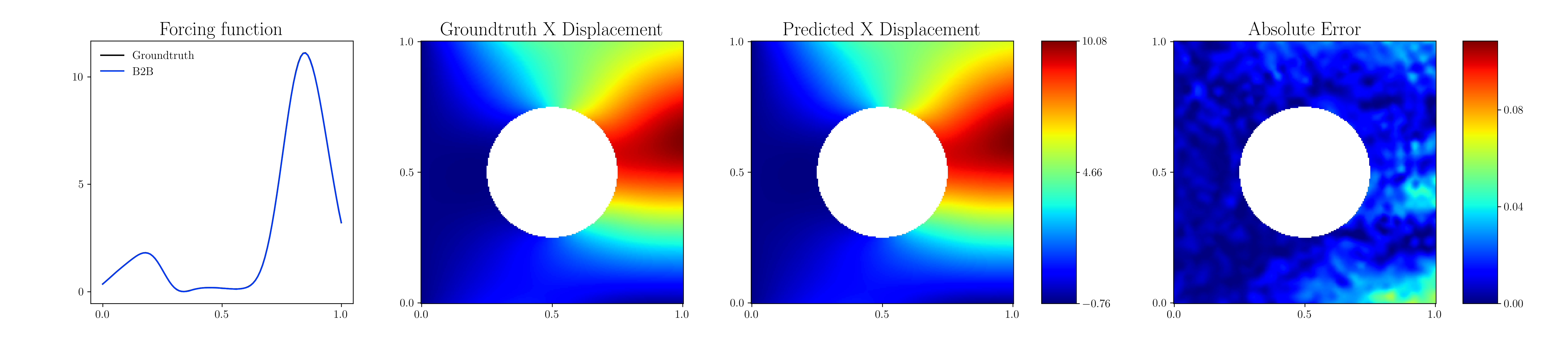}}
    \vspace{1em} 
    \subfloat[DeepONet]{
    \includegraphics[width=0.9\linewidth]{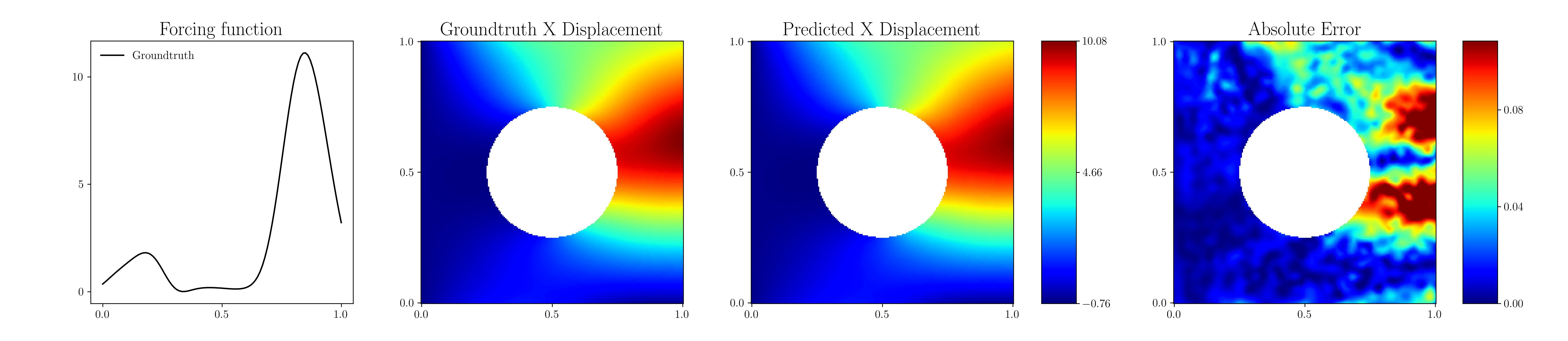}}
    \caption{\textit{Worst-case sample for the elastic plate dataset.} This figure compares B2B (top) and DeepONet (bottom) on the worst-case sample for  B2B from the elastic plate dataset. Left: the example input function, a forcing function, applied to the side of the plate (black) and B2B's estimate (blue). The B2B estimate is calculated using partial data from the ground truth forcing function and the input basis functions. The low-loss estimate suggests that the basis function coefficients effectively capture the forcing function. Center-left: Ground truth normalized x-direction displacement of the panel. Center-right: Estimated x-direction displacement using the corresponding algorithm. Right: Absolute error between the estimated and ground truth x-direction displacements. The color bar range represents $1\%$ of the absolute difference shown in the middle panes.}
    \label{fig:elasticity_error}
\end{figure}

The test error convergence plots are shown in Figure~\ref{fig:heat}.  B2B and DeepONet, while similar in architecture for this dataset, differ primarily in their approach to training the output basis. DeepONet employs end-to-end training, allowing the network to optimize the basis representation directly from data. In contrast, B2B uses a function encoder algorithm to train its output basis, aiming to reproduce the output function space accurately. Despite this distinction, both methods utilize similar network components (branch network for DeepONet, operator network for B2B) to map inputs to basis coefficients. 
The convergence plot depicts a slightly better performance of DeepONet over B2B. This observation suggests that DeepONet's end-to-end training may be more effective when there is no explicit input function space. 
This implies that B2B's strengths are most evident in scenarios with a well-defined input function space, where it can leverage this structure to enhance generalization and interpretability. A representative sample depicting the predictions of DeepONet and B2B is shown in Figure~\ref{fig:heat_accuracy}.

\begin{figure}[!h]
    \centering
    \subfloat[Basis-to-Basis]{
    \includegraphics[width=\linewidth]{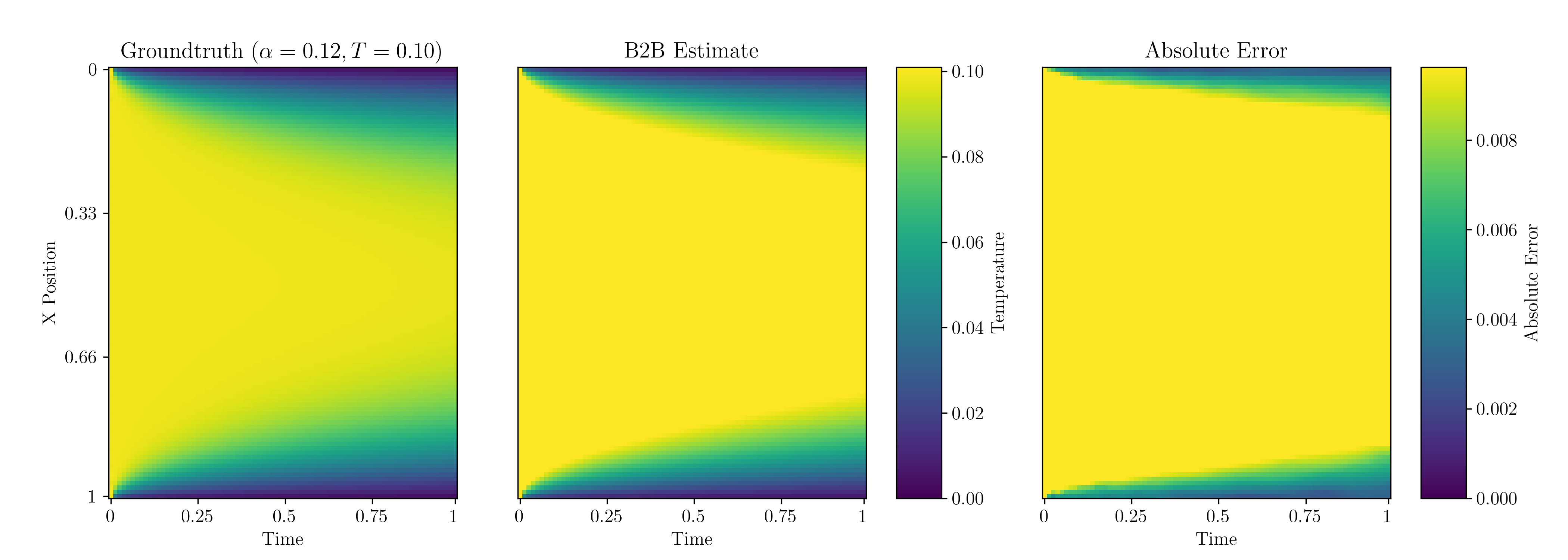}}
    \vspace{1em} 
    \subfloat[DeepONet]{
    \includegraphics[width=\linewidth]{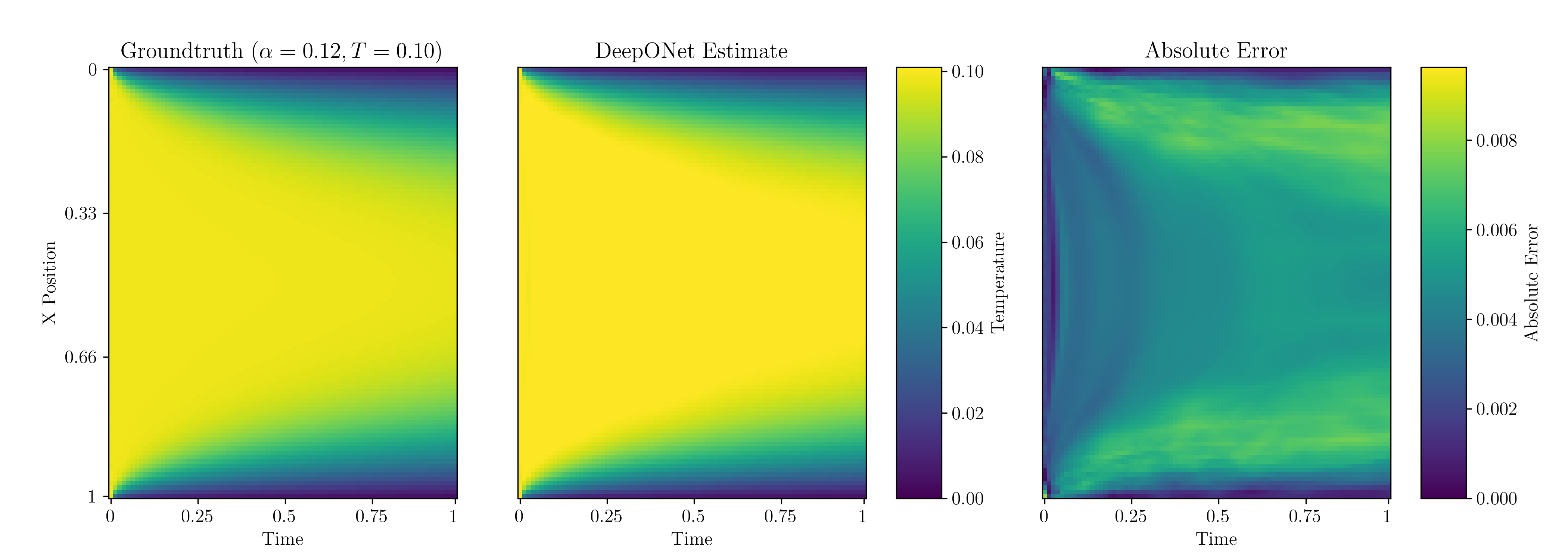}}
    \caption{\textit{Worst-case sample for the parameterized heat equation dataset.} This figure compares B2B (top) and DeepONet (bottom) on the worst-case sample for  B2B on the parameterized heat equation dataset.  Left: The ground truth relationship between the x position, time, and temperature. Center: the approximated relationship between x position, time, and temperature. Right: The absolute error in the approximation. The color bar ranges from 0 to 10\% of the maximum difference in output value, with bright yellow indicating $>\!10\%$ error.}
    \label{fig:heat_accuracy}
\end{figure}

\subsection{Burger's Equation}
\noindent For the final example, we consider the 1D wave propagation problem described by Burger's equation as:
\begin{equation}\label{burgers_eqn_IBC}
    \begin{split}
&\frac{\partial u}{\partial t} (x,t)=  \nu \frac{\partial^2 u}{\partial x^2} (x,t)- u\frac{\partial u}{\partial x} (x,t) \text{ on }  \ \Omega: (x,t) \in [0,1]^2 , \\
\text{such that:  } &u(x,0)= f(x), \; u(0,t)= u(1,t)\; \text{and } \frac{\partial u}{\partial x}(0,t) =  \frac{\partial u}{\partial x}(1,t),
  \end{split}
\end{equation}
where $u(x,t)$ denotes the evolving spatio-temporal velocity field, defined over the spatial and time coordinates, $(x,t)$, with viscosity parameter $\nu$. Our objective is to learn the transformation $\mathcal{T}$ that maps the initial velocity field $u(x,0)$ to the velocity field $u(x,t)$ at any subsequent time $t$ across the entire spatial domain, i.e., $\mathcal{T}: u{x,0}\rightarrow u(x,t)$. To generate varying initial conditions, we sample from a Gaussian Random Field (GRF) with mean zero and covariance operator $25^2(-\Delta + 5^2I)^{-4}$, subject to periodic boundary conditions. The model training utilizes random realizations of $f(x)$ drawn from this GRF.

Figure \ref{fig:burger_quantitative} presents our comparative analysis. The B2B architecture demonstrates superior performance in both convergence rate and asymptotic accuracy compared to DeepONet. Notably, both 2-state and POD DeepONets fail to converge on this challenging dataset. The significant performance gap between SVD (a linear operator) and the neural architectures (B2B and DeepONet) indicates the inherently nonlinear nature of the underlying operator. Figure \ref{fig:burger_qual} presents a detailed comparison between B2B and DeepONet on B2B's worst-performing test case.

\begin{figure}[!h]
    \centering
    \includegraphics[width=0.6\linewidth]{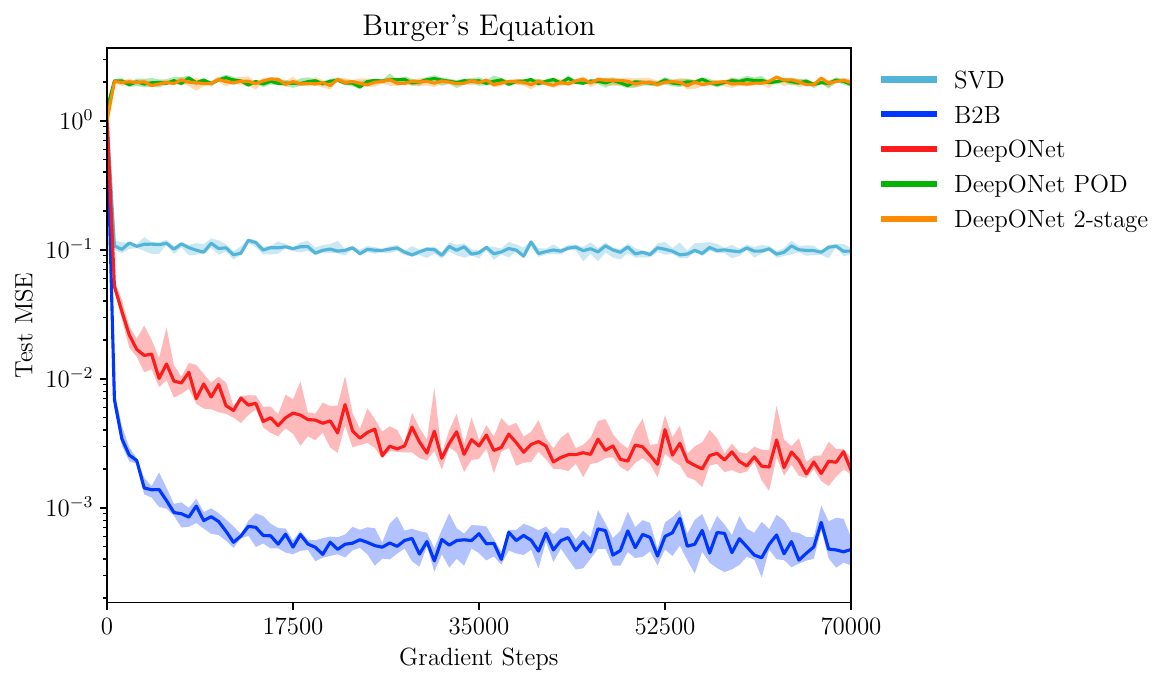}
    \caption{\textit{Training curves on the Burger's equation dataset.} This figure plots the test MSE for each algorithm during training. B2B achieves the best convergence speed and asymptotic performance on this dataset. Interestingly, DeepONet POD and 2-stage fail on this dataset. }
    \label{fig:burger_quantitative}
\end{figure}
\begin{figure}[!h]
    \centering
    \subfloat[Basis-to-Basis]{
    \includegraphics[width=0.9\linewidth]{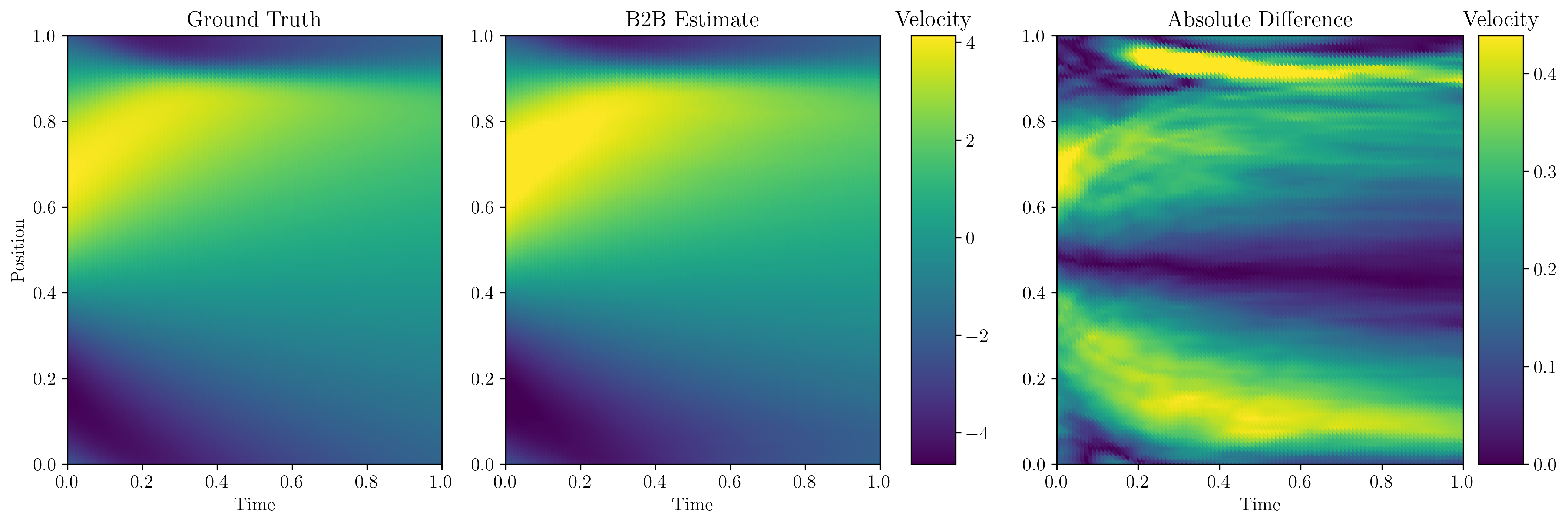}}
    \vspace{1em} 
    \subfloat[DeepONet]{
    \includegraphics[width=0.9\linewidth]{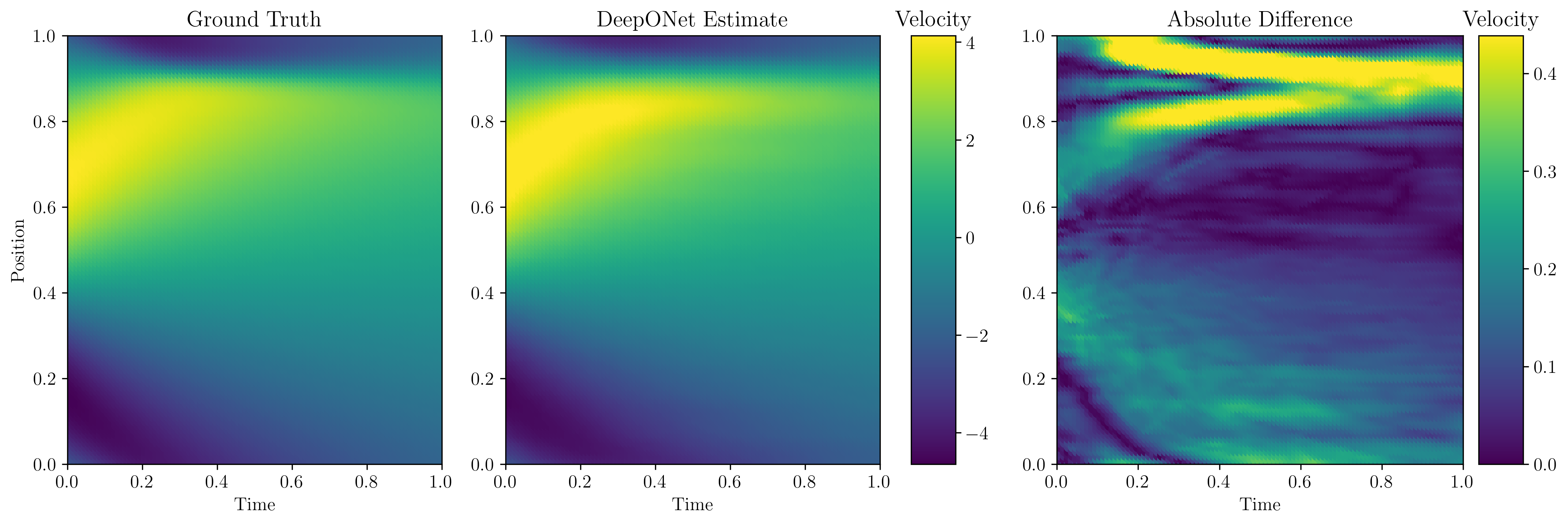}}
    \caption{\textit{Worst-case sample for the Burger's equation dataset.} This figure compares B2B (top) and DeepONet (bottom) on the worst-case sample for  B2B on Burger's equation dataset.  Left: The ground truth relationship between position, time, and fluid velocity. Center: the approximated relationship between position, time, and fluid velocity. Right: The absolute error in the approximation. The color bar ranges from 0 to 5\% of the maximum difference in output value, with bright yellow indicating $>\!5\%$ error.}
    \label{fig:burger_qual}
\end{figure}

\section{Ablations}
\label{sec:ablation}

Sensitivity to hyper-parameters is an important consideration for any algorithm. We evaluate the sensitivity of our approach to its two main hyper-parameters, the number of basis functions and the number of input sensors. Ablation figures indicate the Test MSE after 70,000 gradient steps, averaged over the last 10 steps to reduce noise. Lines and shaded areas indicate median, first, and third quartiles over 3 seeds for each value of hyper-parameter.  

\subsection{Number of Basis Functions}
A key hyper-parameter for our approach is the number of basis functions used for both input and output function spaces. By default, all experiments use $100$ basis functions, which is possible due to parameter sharing. We perform an ablation on several basis functions, both for our approach and for DeepONet. See Figure~\ref{fig:ablation_basis}. 

On the anti-derivative example, all approaches are insensitive to the number of basis functions. This is likely because the dimensionality of the function space is only three, and so any more than three basis functions are unnecessary. 
For the 2D Darcy flow dataset, all approaches see decreasing performance as the number of basis functions approaches 0. This implies the dimensionality of the function spaces is much higher. Interestingly, B2B degrades faster than DeepONet and its variants. This is likely because DeepONet uses an end-to-end loss function, and therefore learns the basis that achieves the lowest error on this end-to-end objective. In contrast, B2B learns basic functions to reproduce the individual function spaces. 
If the bases are over-specified, then they will fully represent each function space, and the operator can then be learned without issue. 
However, if too few basis functions are used, it may be the case that the basis that best reproduces the space does not coincide with the basis that performs best on the end-to-end operator loss. 
However, due to the scaling properties of function encoders, this issue may be avoided by simply choosing a large number of basis functions, e.g., 100, as this is enough for most problems. 

\begin{figure}[!h]
    \begin{minipage}{.48\linewidth}
        \includegraphics[width=1.0\linewidth]{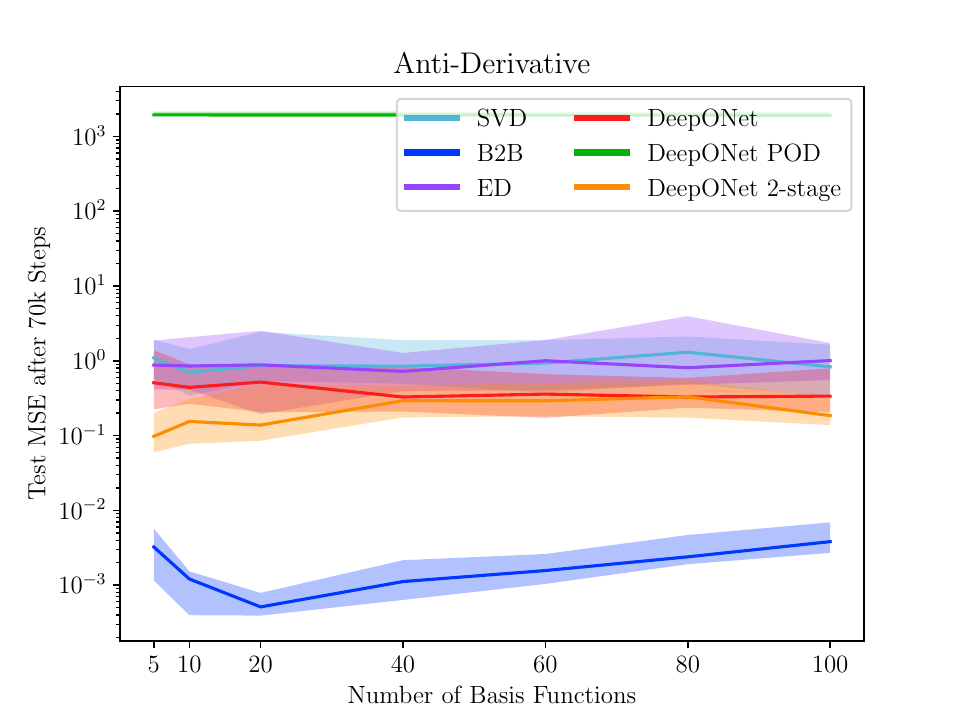}
    \end{minipage}%
    \hfill
    \begin{minipage}{.48\linewidth}
        \includegraphics[width=1.0\linewidth]{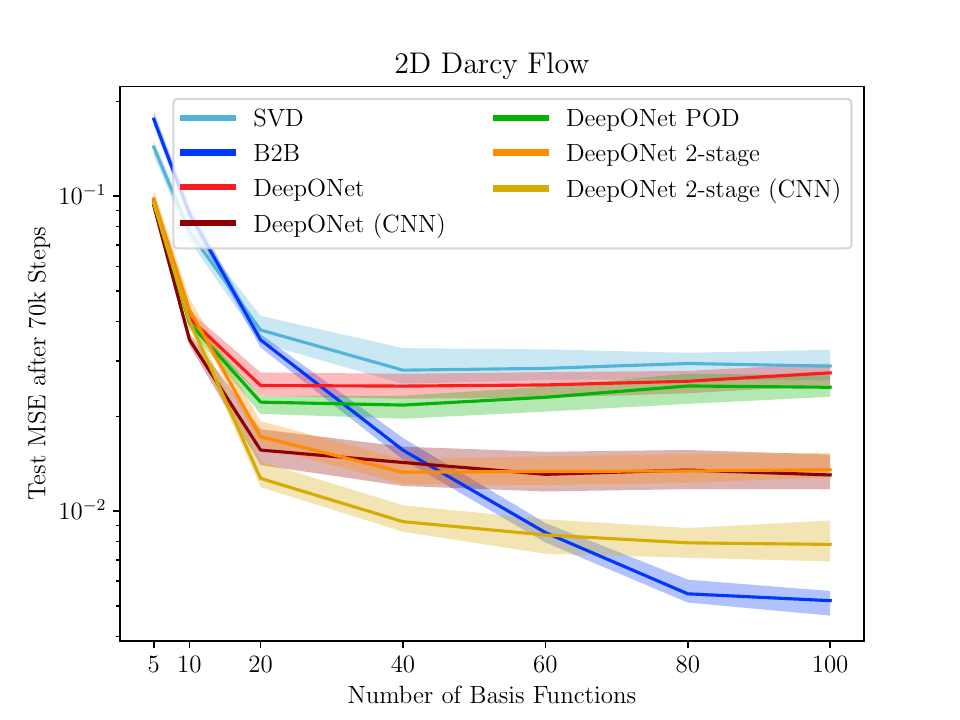}
    \end{minipage}%
    \caption{\textit{Ablation on how the number of basis functions affects performance}. Left: All algorithms are insensitive to the number of basis functions for the anti-derivative dataset because the dimensionality of this problem is low. Right: In contrast, the dimensionality of the 2D Darcy flow dataset is relatively high, as all approaches degrade as the number of basis functions approaches 0. B2B is more sensitive than other approaches as it does not use an end-to-end loss.  }
    \label{fig:ablation_basis}
\end{figure}
\begin{figure}[!h]
    \begin{minipage}{.48\linewidth}
        \includegraphics[width=1.0\linewidth]{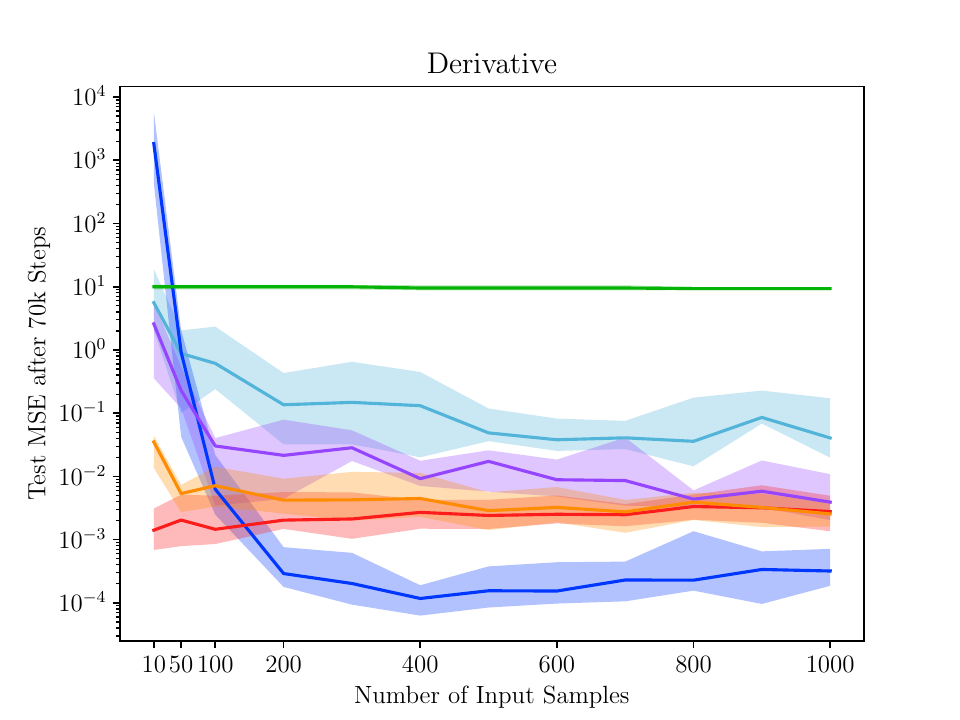}
    \end{minipage}%
    \hfill
    \begin{minipage}{.48\linewidth}
        \includegraphics[width=1.0\linewidth]{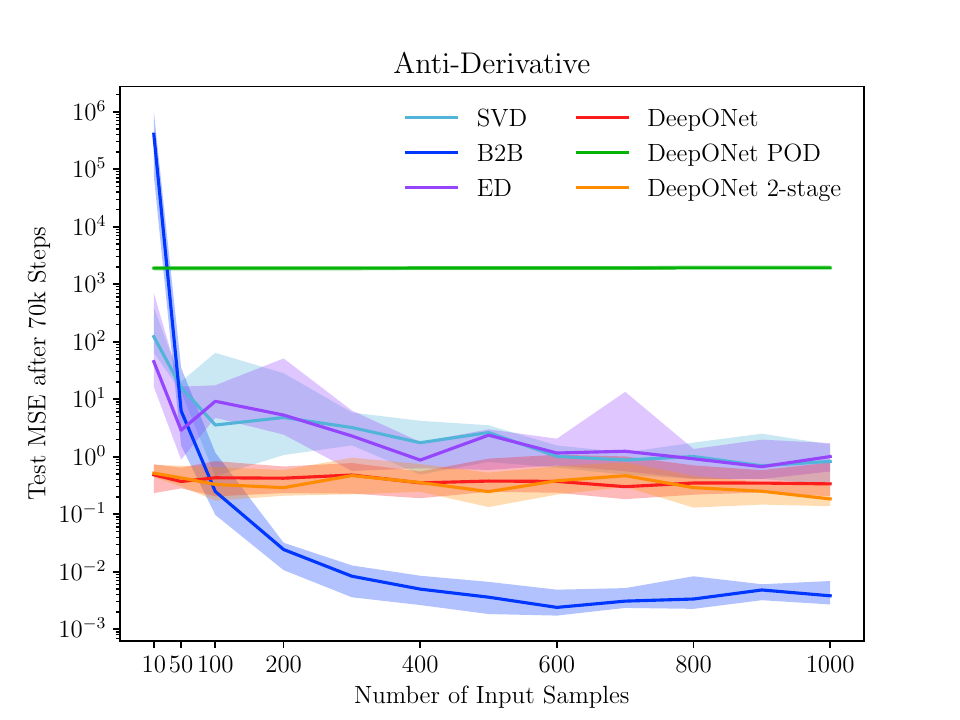}
    \end{minipage}%
    \caption{\textit{An Ablation on how the number of input samples affects performance.} Left: DeepONet is relatively insensitive to the number of input samples. B2B and its variants show a drop off in performance as the number of samples decreases to 0.  Right: The results are largely the same on the anti-derivative dataset.  }
    \label{fig:ablation_sensor}
\end{figure}
\begin{figure}[!h]
    \centering
    \includegraphics[width=0.68\linewidth]{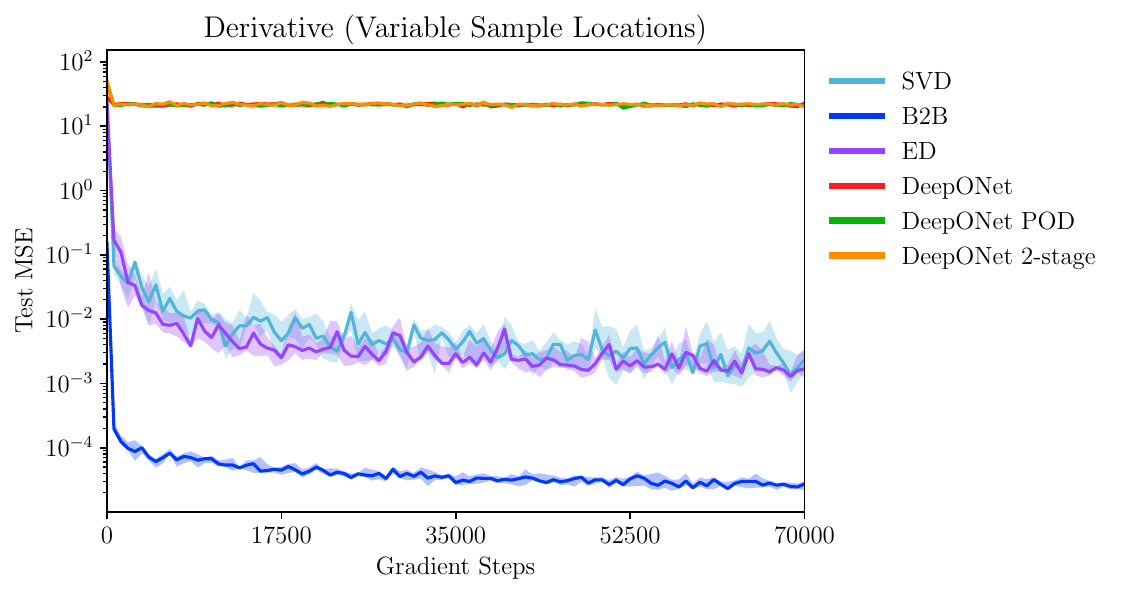}
    \caption{\textit{An ablation with variable sample locations.} DeepONet and its variants fail to converge if the input samples have varied locations for every function, as expected. In contrast, B2B and its variants are unaffected because they use least squares to compute basis function coefficients, a method which is insensitive to the sample locations.}
    \label{fig:varying-sensor-loc}
\end{figure}
\subsection{Number of Input Samples}
The number of input sensors used to identify the input function is another important hyper-parameter. An ideal algorithm would be able to make use of large amounts of data while still performing reasonably well in small data settings. We perform ablations on the derivative and anti-derivative examples. We observe that DeepONet and its variants are relatively insensitive to the number of sensors, though its performance seems to improve on Derivative with \textit{less} sensors. This suggests it is not making good use of the data in large data settings. B2B and its variants are also relatively insensitive until a critical threshold is reached at low data settings. This is likely due to the least-squares solution over fitting if the number of samples is less than the number of basis functions. In future work, we plan to investigate how to best correct this by regularizing the least-squares solution.

\subsection{Variable Input Sample Locations}
Many neural operator architectures like DeepONet are designed with fixed input dimensions. This design choice stems from the structure of their neural networks, particularly the branch network, which expects a consistent number of input features across all samples. Mathematically, suppose we denote the input function as $f(x)$. In that case, these methods typically require evaluations at fixed locations ${x_1, \ldots, x_m}$, such that the input to the network is the fixed vector $(f(x_1), \ldots, f(x_m)) \in \mathbb{R}^m$.

This rigid structure presents significant challenges when dealing with datasets collected at variable locations. In practical scenarios, sensor placements may vary due to experimental constraints, equipment limitations, or the nature of the phenomenon being studied. Consequently, existing methods struggle to generalize in these settings, as they cannot naturally accommodate inputs of varying dimensionality or structure.

B2B offers a solution to this problem by leveraging the least-squares method. 
Instead of directly mapping input data to the coefficients of the basis functions as in DeepONet, B2B uses least squares to compute a function representation and then maps this representation to a corresponding representation in the output space. 
As least squares is well-defined for any set of input points regardless of their locations, B2B can easily handle a variable number of input samples, or input samples at variable locations.
In other words, this approach decouples the operator learning problem from the specific sampling locations of the input data.

\Cref{fig:varying-sensor-loc} illustrates the MSE test performance of B2B and DeepONet variants on learning the derivative operator with variable input sensor locations. The results demonstrate a stark contrast between the two approaches. The B2B approach (both SVD and standard variants) shows robust performance, achieving and maintaining low test MSE (around $10^{-3}$ to $10^{-4}$) throughout the training process. B2B successfully learns the derivative operator despite the variability in input sampling locations. In contrast, all variants of DeepONet (vanilla, POD, and two-stage) exhibit significantly higher test MSE, with errors consistently above $10^1$.  By learning basis functions that span the entire input domain, B2B can effectively interpolate between arbitrary sampling points, enabling it to generalize well to inputs with varying sensor locations.

\section{Conclusion \& Future Work}
\label{sec:conclusion}
This paper introduces a novel framework for learning nonlinear operators on Hilbert spaces using the theory of function encoders. Our approach leverages the structure of Hilbert spaces to learn efficient and interpretable representations of function spaces and the operators acting on them. We developed three operator learning variants using our framework: Basis-to-Basis operator learning (B2B), Singular Value Decomposition (SVD), and Eigendecomposition (ED). The B2B variant offers particular flexibility in handling linear and nonlinear operators while maintaining a clear separation between function space representation and operator mapping.

We demonstrated the effectiveness of our approach on several benchmark problems, including derivative and anti-derivative operators, 1D and 2D Darcy flow equations, linear elasticity, Burger's equation, and a parameterized heat equation. Our results showed that the function encoder-based methods, especially the B2B variant, consistently outperformed the popular DeepONet architecture and its variants in accuracy and convergence speed. The ability to generalize well across different function spaces and handle variable input sensor locations without retraining highlights its robustness and practical applicability.

We performed linear operator analysis to provide deeper insights into the learned operators, particularly for the SVD and ED variants. While these variants are restricted to linear operators, they offer robust and interpretable representations, enabling analysis of properties such as the operator's spectrum, and contributing to a better understanding of the underlying physical phenomena. Our approach shows remarkable adaptability across various problem domains, from simple linear operators to complex PDEs, without requiring modifications to the underlying algorithm.

In future work, we intend to explore adaptive basis function learning, where the number and form of basis functions can be dynamically adjusted based on the complexity of the function spaces involved. Furthermore, as shown in \cite{ingebrand2024zeroshottransferneuralodes}, function encoders are architecture-agnostic and can therefore use more exotic architectures depending on the problem, e.g.,\ neural ODEs for operators in dynamical systems or GNNs for operators acting on graphs. Lastly, function encoders apply to any Hilbert space, including probability distributions. Therefore, this work opens the door to modeling stochastic operators. 

\section*{Acknowledgments}
The authors would like to acknowledge Dr.\ Bahador Bahmani at Johns Hopkins University for generating the initial 1D nonlinear Darcy dataset, which was used for preliminary experiments. The dataset has since been replaced. 

This work was funded in part by AFOSR FA9550-19-1-0005, NSF 2214939, NSF 2339678, NSF 2438193, and HR0011-24-9-0431.  Any opinions, findings, conclusions, or recommendations expressed in this material are those of the author(s) and do not necessarily reflect the views of the funding organizations.

\bibliographystyle{elsarticle-num}
\bibliography{bibliography}
\newpage
\section*{Appendix}
\appendix

\section{Dataset Details}
\begin{table}[h]
    \caption{The parameters for each dataset. $N$ indicates the number of functions that are trained on, e.g.\ $\mathcal{D} = D_1, D_2, ..., D_N$. Note that the derivative and anti-derivative datasets have $700{,}000$ functions because they are generative, there were $70{,}000$ gradient steps, and functions are sampled in batches of $10$. $m$ indicates the number of input points, i.e.\ $\{\big (x_i, f(x_i) \big )\}_{i=1}^m$. $p$ indicates the number of output points, i.e.\ $\{\big (y_i, \mathcal{T}f(y_i)\big ) \}_{i=1}^p$.}
    \label{tab:datasets}
    \centering
    \small
    \begin{tabular}{lcccccc} 
        \toprule
        & $N_{train}$ & $N_{test}$ & $m$ & $p$ \\
        \midrule
        Anti-Derivative & $700{,}000$ & $700{,}000$ & $1{,}000$ & $10{,}000$ \\
        Derivative & $700{,}000$ & $700{,}000$ &$1{,}000$ & $10{,}000$ \\
        1D Darcy Flow & $800$ & $200$  & $40$ & $40$ \\
        2D Darcy Flow & $51{,}000$ & $7{,}000$ & $961$ & $450$ \\
        Elastic Plate & $1{,}850$ &  $100$ & $101$ & $1{,}048$ \\
        Parameterized Heat Equation & $250$ & $30$ & $1$ & $784{,}080$ \\
        Burger's Equation & $2000$ & $500$ & $101$ & $10{,}201$ \\
        \bottomrule
    \end{tabular}
\end{table}

\section{Theoretical details of deep operator network (DeepONet)}

Neural operator learning, which involves employing DNNs to learn mappings between infinite-dimensional function spaces, has recently gained significant attention, particularly for its applications in learning ODEs and PDEs. A classical solution operator learning task then involves the learning of mapping across a range of scenarios, \textit{e.g.}, different domain geometries, input parameters, and initial and boundary conditions to the solution of the underlying ODE/PDE system. At the moment, there are a plethora of different neural operators, among which we can distinguish meta-architectures, \textit{e.g.}, deep operator networks (DeepONet)~\cite{lu2021learning} motivated by the universal approximation theorem for operators \cite{chen1995universal} and operators based on integral transforms, \textit{e.g.}, the Fourier neural operator (FNO) \cite{li2021fourier}, wavelet neural operator (WNO) \cite{tripura2023wavelet}, the graph kernel network (GKN) \cite{anandkumar2020neural} and the Laplace neural operator (LNO) \cite{cao2024laplace}, to name a few. 

In this work, we have compared the accuracy and robustness of the function encoder-based operator learning approaches with DeepONet. For all the applications, the DeepONet framework is employed to construct the solution operator, $\mathcal {T}$ that maps the input space, $\mathcal G$ to the output space, $\mathcal H$. Let $\Omega \subset \mathbb{R}^D$ be a bounded open set in the physical space, and let $\mathcal{G} = \mathcal{G}(\Omega; \mathbb{R}^{d_x})$ and $\mathcal{H} = \mathcal{H}(\Omega; \mathbb{R}^{d_y})$ be two separable functional spaces for which the mapping is to be learned. Furthermore, assume that $\mathcal {T}: \mathcal{G} \rightarrow \mathcal{H}$ is a nonlinear map arising from the solution of an unknown static or time-dependent PDE. The objective is to approximate the nonlinear operator via the following parametric mapping
\begin{equation}
\begin{aligned}
    \mathcal {T}: \mathcal{G} \times \mathbf{\Theta} \rightarrow \mathcal{H} \hspace{15pt} \text{or}, \hspace{15pt} \mathcal {T}_{\boldsymbol{\theta}}: \mathcal{G} \rightarrow \mathcal{H}, \hspace{5pt} \boldsymbol{\theta} \in \mathbf{\Theta},
\end{aligned}
\end{equation}
where $\mathbf{\Theta}$ is a finite-dimensional parameter space.
The optimal parameters $\boldsymbol{\theta}^*$ are learned via the training of a neural operator with backpropagation based on a labeled dataset obtained from the questionnaire algorithm. 

The DeepONet architecture consists of two concurrent DNNs, a branch network, and a trunk network. The branch network encodes the varying input realizations discretized at $m$ fixed sensor locations, $g(\boldsymbol{x}_i)_{i=1}^{m}$, where $g\in \mathcal G$, and the trunk network takes the spatial and temporal locations on the bounded domain $(\boldsymbol{x}_i, t_i)$, where the unknown PDE is evaluated. The solution operator for an input realization, $g_1$, can be expressed as:
\begin{equation}\label{eq:output_deeponets}
    \begin{aligned}
      \mathcal {T}_{\boldsymbol{\theta}}(g_1)(\boldsymbol{x}_j, t_j) = \sum_{i = 1}^{p}b_i(g_1)\cdot tr_i(\boldsymbol{x}_j, t_j),   
    \end{aligned}
\end{equation}
where $[b_1, b_2, \ldots, b_p]^T$ is the output vector of the branch net, $[tr_1, tr_2, \ldots, tr_p]^T$ the output vector of the trunk net and $p$ denotes a hyperparameter that controls the size of the output layer of both the branch and trunk networks. The trainable parameters of the DeepONet, represented by $\boldsymbol{\theta}$ in Equation~\eqref{eq:output_deeponets}, are obtained by minimizing a data-driven loss function, which is expressed as:
\begin{equation}
\begin{aligned}
    \mathcal L(\theta) = \min_{\theta} \|  \mathcal T_{\boldsymbol{\theta}}(g)(\boldsymbol{x}, t) -  \mathcal T(g)(\boldsymbol{x}, t) \|^2_2,
\end{aligned}
\end{equation}
where $\mathcal T_{\boldsymbol{\theta}}(g)(\boldsymbol{x}, t)$ is the prediction of the DeepONet and $\mathcal T(g)(\boldsymbol{x}, t)$ is the ground truth of the solution obtained from either experimental measures or by using a standard numerical solver. The branch and trunk networks can be modeled with any DNN architecture. Here, we consider a fully connected feed-forward neural network for both networks (branch and trunk).

\section{Worst-Case Performance}

\begin{table}[h]
    \caption{Worst-Case Mean squared error obtained for all applications presented in this work using Function Encoders and DeepONet. Max is computed over the last 10 evaluations over 10 seeds for each algorithm. }
    \label{tab:worst_case_mse}
    \centering
    \begin{tabular}{lcccccc} 
        \toprule
        \multicolumn{1}{c}{Dataset} & \multicolumn{3}{c}{Function Encoders} & \multicolumn{3}{c}{DeepONet} \\
        \cmidrule{1-7}
            & B2B & SVD & Eigen & Vanilla & POD & Two-stage\\
        \midrule
        Anti-Derivative & $\mathbf{5.84\mathrm{e}{-02}}$ & $3.51\mathrm{e}{+00}$ & $9.44\mathrm{e}{+00}$ & $9.61\mathrm{e}{-01}$ & $2.25\mathrm{e}{+03}$ & $3.98\mathrm{e}{-01}$ \\
        Derivative  & $\mathbf{2.18\mathrm{e}{-03}}$ & $6.25\mathrm{e}{-02}$ & $1.30\mathrm{e}{-02}$ & $1.09\mathrm{e}{-02}$ & $1.07\mathrm{e}{+01}$ & $3.90\mathrm{e}{-03}$ \\
        1D Darcy Flow & $\mathbf{2.66\mathrm{e}{-05}}$ & $1.03\mathrm{e}{-03}$ &  -  & $6.26\mathrm{e}{-05}$ & $4.72\mathrm{e}{-05}$ & $4.59\mathrm{e}{-04}$ \\
        2D Darcy Flow  & $\mathbf{7.53\mathrm{e}{-03}}$ & $3.28\mathrm{e}{-02}$ &  -  & $3.13\mathrm{e}{-02}$ & $2.86\mathrm{e}{-02}$ & $1.66\mathrm{e}{-02}$ \\
        Elastic Plate & $\mathbf{1.85\mathrm{e}{-04}}$ & $1.50\mathrm{e}{-01}$ &  -  & $2.84\mathrm{e}{-03}$ & $3.99\mathrm{e}{-03}$ &  -    \\
        Parameterized Heat Equation  & $\mathbf{1.14\mathrm{e}{-03}}$ & $2.70\mathrm{e}{-01}$ &  -  & $3.78\mathrm{e}{-03}$ & $1.05\mathrm{e}{+00}$ &  -  \\
        Burger's Equation & $\mathbf{9.46\mathrm{e}{-04}}$ & $1.21\mathrm{e}{-01}$ &  -  & $3.18\mathrm{e}{-03}$ & $2.27\mathrm{e}{+00}$ & $2.40\mathrm{e}{+00}$ \\
        \bottomrule
    \end{tabular}

\end{table}

\end{document}